\newtheorem{theorem}{Theorem}[section]
\newtheorem{lemma}{Lemma}[section]
\newtheorem{corollary}{Corollary}[section]
\newtheorem{remark}{Remark}[section]
\newenvironment{manualtheorem}[1]{%
  \manualtheoreminner
}{\endmanualtheoreminner}
\newenvironment{manualcorollary}[1]{%
  \manualcorollaryinner
}{\endmanualcorollaryinner}
\newcommand{\ddr}{\mathrm{d}} 
\newcommand{\bx}{\boldsymbol{x}}
\newcommand{\bX}{\boldsymbol{X}}
\newcommand{\bw}{\boldsymbol{w}}
\newcommand{\bW}{\boldsymbol{W}}
\newcommand{\bv}{\boldsymbol{v}}
\newcommand{\be}{\boldsymbol{e}}
\newcommand{\bg}{\boldsymbol{g}}
\newcommand{\bh}{\boldsymbol{h}}
\newcommand{\T}{{\text{\tiny\sffamily\upshape\mdseries T}}}
\def\theoremDependence{Consider a Bayesian neural network as described in Equation~\eqref{eq:hidden_unit_form} with some activation function $\phi$. Let elements of weight vector $\bW^{(\ell)}$ follow some zero-center elliptical (possibly different and possibly dependent) distributions, and weight vectors be independent for distinct units of the following layer. 
If~$\ell = 1$, then $\Delta^{(\ell)}(z_1, z_2) = 0$ for all $z_1, z_2$. 
If $\ell \ge 2$, then $\Delta^{(\ell)}(z_1, z_2) \ge 0$ if $z_1 z_2 \ge 0$, and $\Delta^{(\ell)}(z_1, z_2) \le 0$ otherwise. If (and only if) the activation function $\phi$ satisfies $\mathbb{P} \left( \phi(\bg^{(\ell)}) = 0 \right) = 0$ for any $\bg^{(\ell)}$, then $\Delta^{(\ell)}(0, z_2) = 0$ and $\Delta^{(\ell)}(z_1, 0) = 0$ for any $z_1, z_2$. 
}
\def\corollaryTauRho{In Bayesian neural networks under assumptions of Theorem~\ref{theorem:hidden_units_dependence}, Kendall's tau and Spearman's rho computed for hidden units are equal to zero.}
\def\theoremCovariance{Consider a Bayesian neural network as described in Equation~\eqref{eq:hidden_unit_form} with ReLU activation function. Assume that weights $w^{(\ell)}$ are centered and independent from units $h^{(\ell - 1)}$.
If weights are~uncorrelated, then any pre-activations of the same layer~$\ell$ are~uncorrelated.}
\def\lemmaHiddenUnitsDependence{Let $X_1, \dots, X_N$ be some possibly dependent random variables and $W_1, \dots, W_N$ be symmetric, mutually independent and independent from $X_1, \dots, X_N$, then random variables $X_1 W_1, \dots, X_N W_N$ satisfy the PD condition.
}
\title{Dependence between Bayesian neural network units}
\author{Mariia Vladimirova\footnote[1]{} 
\qquad
  Julyan Arbel 
\qquad
  St\'ephane Girard \\
Univ. Grenoble Alpes, Inria, CNRS, Grenoble INP, LJK, 38000 Grenoble, France
 }
\begin{document}

\maketitle
\begin{abstract}
The connection between Bayesian neural networks and Gaussian processes gained a~lot of attention in the last few years, with the flagship result that hidden units converge to a Gaussian process limit when the layers width tends to infinity. Underpinning this result is the fact that hidden units become independent in the infinite-width limit. 
Our aim is to shed some light on hidden units dependence properties in practical  finite-width Bayesian neural networks. In addition to theoretical results, we assess empirically the depth and width impacts on hidden units dependence properties.
\end{abstract}
\footnotetext[1]{Corresponding author: \href{mailto:mariia.vladimirova@inria.fr}{mariia.vladimirova@inria.fr}.}

\section{Introduction}

Pre-activations and post-activations of layer $\ell$ in Bayesian neural networks are respectively defined as
\begin{equation}
\label{eq:hidden_unit_form}
    \bg^{(\ell)} = \bW^{(\ell)^\T} \bh^{(\ell - 1)}, \quad  \bh^{(\ell)} = \phi(\bg^{(\ell)}),
\end{equation}
where $\bW^{(\ell)} \in \mathbb{R}^{H_{\ell-1}}\times\mathbb{R}^{H_\ell}$ are weights that follow some prior distribution, $\phi$ is a nonlinear function called activation function, $\bg^{(\ell)} \in \mathbb{R}^{H_\ell}$ is a vector of pre-activations, and $\bh^{(\ell)} \in \mathbb{R}^{H_\ell}$ is a vector of post-activations. For $\ell=0$, $\bh^{(0)}$ is an input vector of deterministic numerical object features. For $\ell > 0$, $H_{\ell}$ is the width of layer $\ell$. When we talk about both $\bg^{(\ell)}$ or $\bh^{(\ell)}$ or when we do not need to specify if we consider pre-activations or post-activations, we refer to units of layer $\ell$. The distributions induced on units are priors in functional space, or induced priors, also called prior predictives in the literature. 

Induced priors in Bayesian neural networks with Gaussian weights become Gaussian processes when the number of hidden units per layer tends to infinity~\citep{neal1996bayesian,matthews2018gaussian,lee2018deep,garriga2019deep}. Stable distributions also lead to stable processes which are generalizations of Gaussian ones~\citep{favaro2020stable}.
Tightening hidden units closer to the Gaussian process can be considered as reducing the induced dependence between units. 
Since it is not the case for finite-width neural networks, dealing with the induced dependence is one of the~problems in describing the prior predictive. 

In this note, we focus on dependence properties that help in better characterizing hidden unit priors. 
We study dependence properties between hidden units in Bayesian neural networks and establish analytically, in Section~\ref{section:dependence_properties}, and empirically, as illustrated on Figure~\ref{fig:dependence_influence_width-depth}, positive and negative dependence induced by weight priors. 
\begin{figure}[ht!]
\centering
\vspace{-0.5cm}
\begin{tikzpicture}
    \node[inner sep=0pt] (d2w2) at (0,0)
    {\includegraphics[height=0.3\linewidth, trim={0.5cm 0 0 0}, clip]{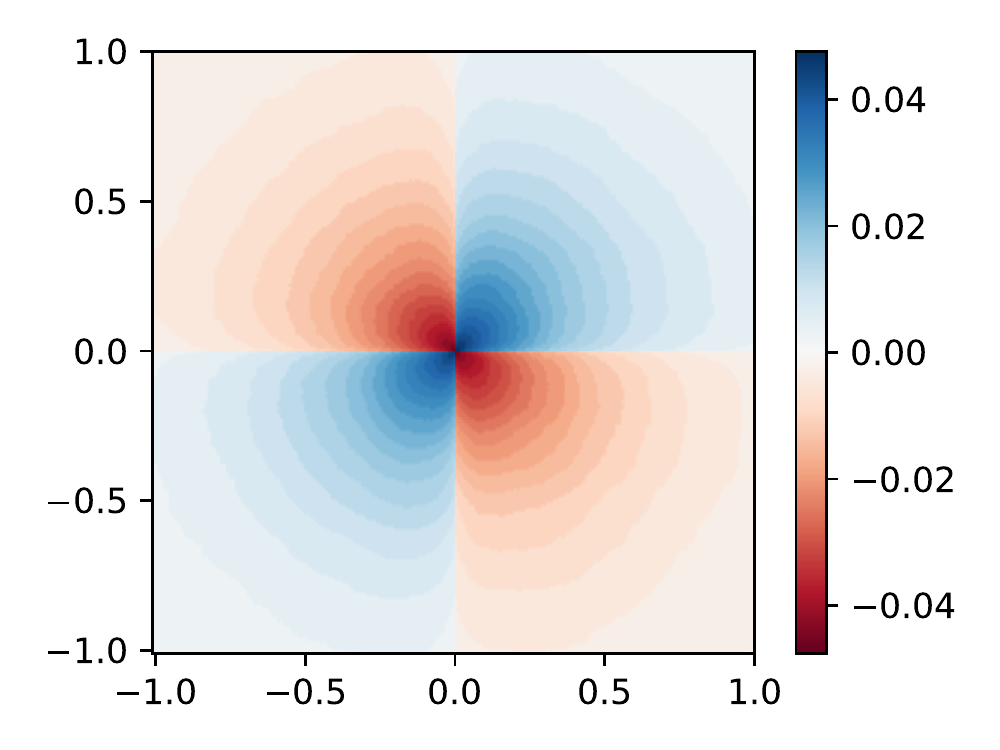}};
    \node [above, xshift=-0.3cm, yshift=-0.4cm] (d2w2_title) at (d2w2.north) {{\small $L=2, H=2$}};
    \node [above right] (d2w5) at (d2w2.east)
    {\includegraphics[height=0.3\linewidth, trim={0.9cm 0 1.7cm 0}, clip]{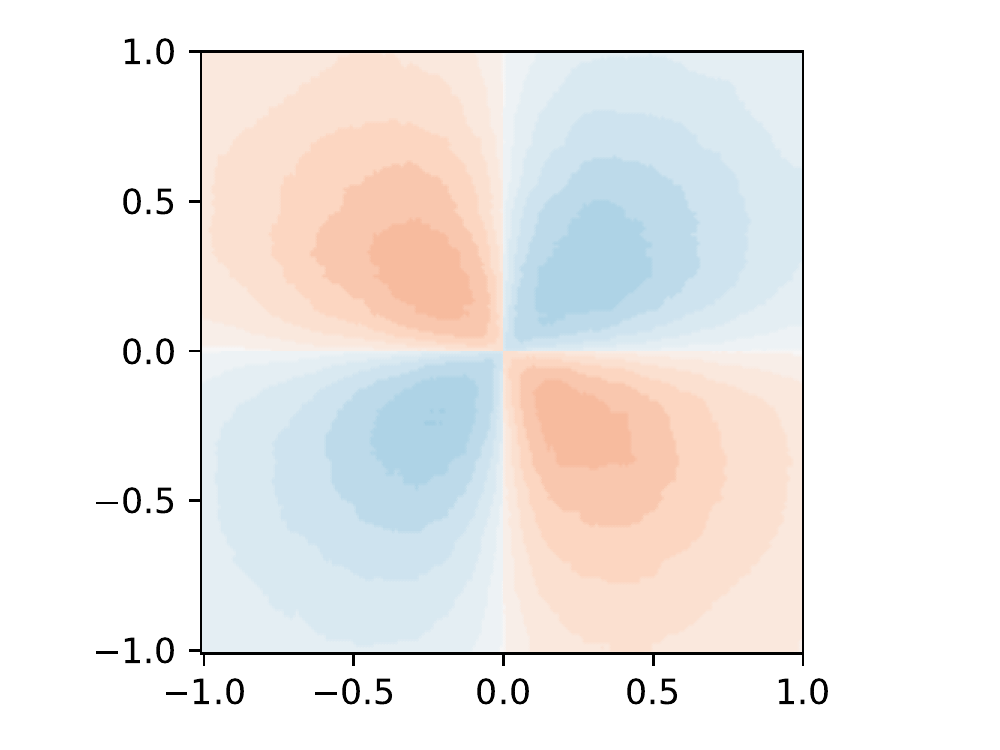}};
    \node [above, yshift=-0.5cm, xshift=0.2cm] (d2w5_title) at (d2w5.north) {{\small $L=2, H=5$}};
    
    \node [right] (d2w10) at (d2w5.east)
    {\includegraphics[height=0.3\linewidth, trim={0.9cm 0 1.7cm 0}, clip]{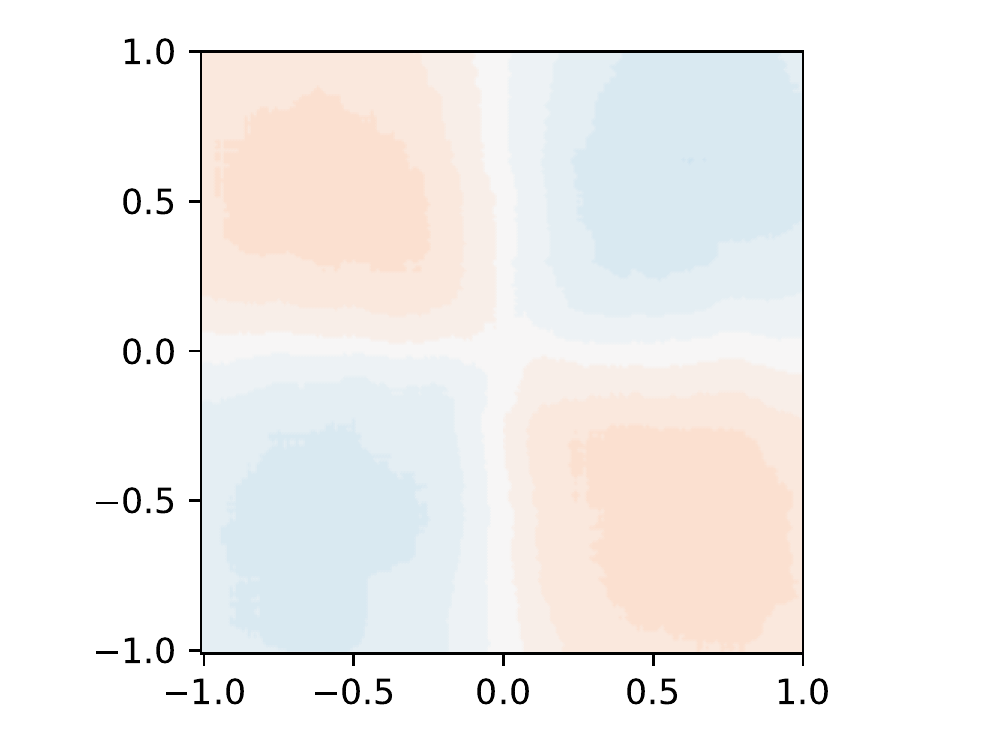}};
    \node [above, yshift=-0.5cm, xshift=0.2cm] (d2w10_title) at (d2w10.north) {{\small $L=2, H=10$}};
     
    \draw[->, in=180, out=50] (d2w2.north)++(0, 0.25)  to node[above] {Increasing $H$} ($(d2w5.west)+(0,0.7)$) ;
    \node [below right] (d3w2) at (d2w2.east)
    {\includegraphics[height=0.3\linewidth, trim={0.9cm 0 1.7cm 0}, clip]{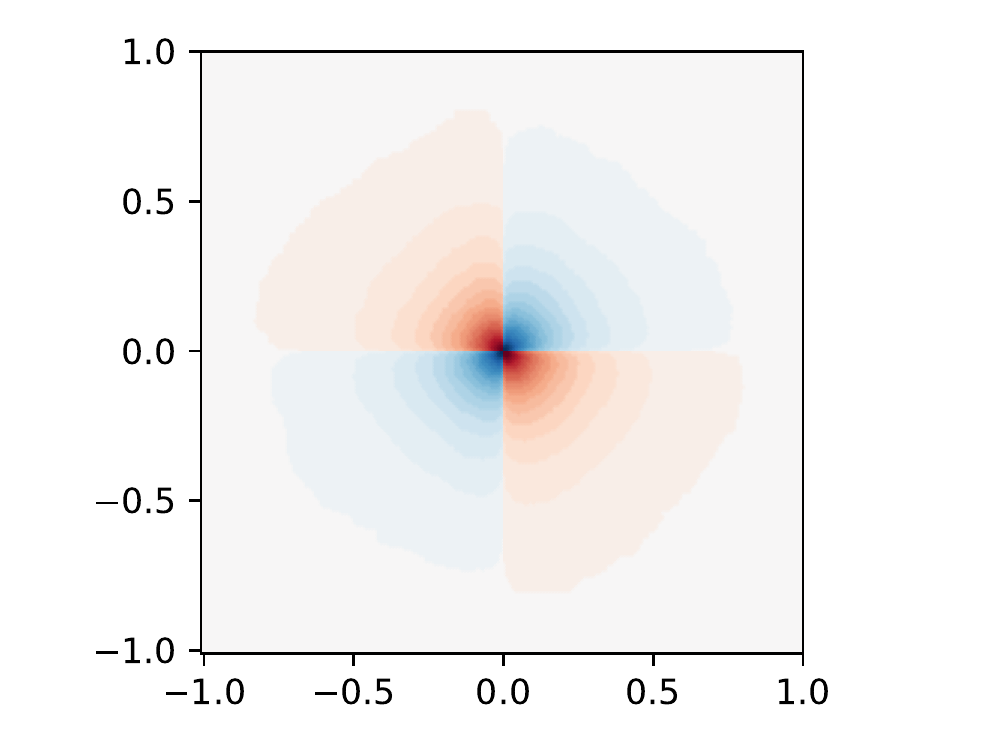}};
    \node [above, yshift=-0.5cm, xshift=0.2cm] (d3w2_title) at (d3w2.north) {{\small $L=3, H=2$}};
    
    \node [right] (d4w2) at (d3w2.east)
    {\includegraphics[height=0.3\linewidth, trim={0.9cm 0 1.7cm 0}, clip]{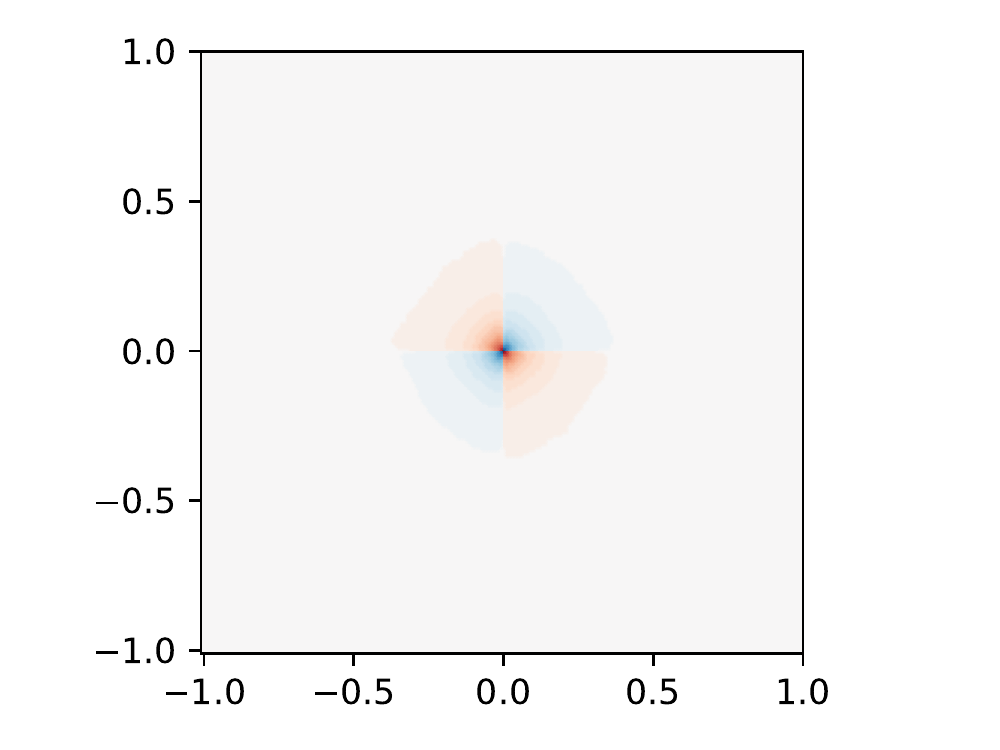}};
    \node [above, yshift=-0.5cm, xshift=0.2cm] (d4w2_title) at (d4w2.north) {{\small $L=4, H=2$}};
    \draw[->, in=180, out=-50] (d2w2.south)++(0, -0.25) to node[below] {Increasing $L$} ( $ (d3w2.west)+(0,-0.7) $ );
\end{tikzpicture}
\caption{Influence of neural network width $H$ (first row) and depth $L$ (second row) on the units dependence measured through $\Delta^{(L)}(z_1, z_2)$, defined in Equation~\eqref{eq:def-delta}.
}
\label{fig:dependence_influence_width-depth}
\end{figure}

\section{Dependence properties}
\label{section:dependence_properties}

We start by showing that 
hidden units of the same layer are uncorrelated for uncorrelated weights. 
This theorem refines the non-negative covariance theorem  from~\citet{vladimirova2018bayesian}, the proof is deferred to Appendix~\ref{appendix:additional_lemmas}.

\begin{theorem}[Covariance between hidden units]
\label{theorem:non-negative_covariance}
\theoremCovariance 
\end{theorem}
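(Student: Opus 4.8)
The plan is to prove directly that, for any two distinct units $i \neq j$ of layer $\ell$, the covariance $\mathrm{Cov}(g_i^{(\ell)}, g_j^{(\ell)})$ equals zero, which is exactly the assertion that the pre-activations of layer $\ell$ are pairwise uncorrelated. First I would expand each pre-activation along the linear combination defining it in Equation~\eqref{eq:hidden_unit_form}, namely $g_i^{(\ell)} = \sum_{k=1}^{H_{\ell-1}} W_{ki}^{(\ell)} h_k^{(\ell-1)}$ with $W_{ki}^{(\ell)}$ the entries of $\bW^{(\ell)}$, and likewise $g_j^{(\ell)} = \sum_{m=1}^{H_{\ell-1}} W_{mj}^{(\ell)} h_m^{(\ell-1)}$.

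I would then dispatch the first moments. Because the weights are centered and independent from $\bh^{(\ell-1)}$, each summand satisfies $\mathbb{E}[W_{ki}^{(\ell)} h_k^{(\ell-1)}] = \mathbb{E}[W_{ki}^{(\ell)}]\,\mathbb{E}[h_k^{(\ell-1)}] = 0$, so $\mathbb{E}[g_i^{(\ell)}] = \mathbb{E}[g_j^{(\ell)}] = 0$ and the covariance collapses to the single cross-moment $\mathbb{E}[g_i^{(\ell)} g_j^{(\ell)}]$.

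The core step is to evaluate that cross-moment. Expanding the product of the two finite sums and again using independence of $\bW^{(\ell)}$ from $\bh^{(\ell-1)}$ to factor each expectation, I obtain $\mathbb{E}[g_i^{(\ell)} g_j^{(\ell)}] = \sum_{k,m} \mathbb{E}[W_{ki}^{(\ell)} W_{mj}^{(\ell)}]\,\mathbb{E}[h_k^{(\ell-1)} h_m^{(\ell-1)}]$, the interchange of sum and expectation being harmless since the sums are finite. Since $i \neq j$, the entries $W_{ki}^{(\ell)}$ and $W_{mj}^{(\ell)}$ always belong to two different columns of $\bW^{(\ell)}$, hence are distinct weights for every $k$ and $m$; centeredness and the uncorrelatedness hypothesis then give $\mathbb{E}[W_{ki}^{(\ell)} W_{mj}^{(\ell)}] = \mathrm{Cov}(W_{ki}^{(\ell)}, W_{mj}^{(\ell)}) = 0$. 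Every summand vanishes and therefore $\mathrm{Cov}(g_i^{(\ell)}, g_j^{(\ell)}) = 0$.

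The point I would take care to articulate is that the post-activations $h_k^{(\ell-1)}$ are in general mutually dependent---this is exactly the dependence the paper is concerned with---so the factors $\mathbb{E}[h_k^{(\ell-1)} h_m^{(\ell-1)}]$ are typically nonzero and cannot be dropped in isolation; what rescues the computation is that they are multiplied by vanishing weight cross-moments, so the input dependence never reaches the pre-activation covariance. This is also what makes the activation immaterial at this level: the ReLU only shapes the law of $\bh^{(\ell-1)}$, which has just been seen to drop out, and it re-enters only once $\phi$ is applied to the pre-activations. In that sense the present statement refines the non-negative covariance result: the pre-activations are already exactly uncorrelated, so the non-negative covariance between hidden units of~\citet{vladimirova2018bayesian} is produced entirely by the nonlinearity one step later. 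I expect the only genuine care to be stating the weight uncorrelatedness for distinct units sharply enough to cover all pairs $(k,m)$ simultaneously; no analytic obstacle arises beyond this bookkeeping.
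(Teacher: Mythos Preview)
Your argument is correct and follows essentially the same route as the paper's own proof: expand the bilinear form, use independence of $\bW^{(\ell)}$ from $\bh^{(\ell-1)}$ to factor each term as $\mathbb{E}[W_{ki}^{(\ell)} W_{mj}^{(\ell)}]\,\mathbb{E}[h_k^{(\ell-1)} h_m^{(\ell-1)}]$, and then kill the weight cross-moments via centeredness plus uncorrelatedness. The only cosmetic difference is that the paper splits off the case $\ell=1$ (where $\bh^{(0)}$ is deterministic) and for $\ell\ge 2$ first factors $\mathbb{E}[W_i\tilde W_j]=\mathbb{E}[W_i]\mathbb{E}[\tilde W_j]$ before invoking centeredness, whereas you handle all layers uniformly by first noting $\mathbb{E}[g_i^{(\ell)}]=0$; both orderings lead to the same vanishing sum.
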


Let $g^{(\ell)}$ and $\tilde g^{(\ell)}$  be two distinct pre-nonlinearities of layer $\ell$, and define
\begin{equation}\label{eq:def-delta}
    \Delta^{(\ell)}(z_1, z_2) := \mathbb{P}(g^{(\ell)} \geq z_1, \tilde g^{(\ell)} \geq z_2)-\mathbb{P}(g^{(\ell)}\geq z_1)\mathbb{P}(\tilde g^{(\ell)}\geq z_2).
\end{equation}
The following theorem represents how the sign of $\Delta^{(\ell)}(z_1, z_2)$ depends on signs of $z_1$ and $z_2$. 
Usually the weights in Bayesian neural networks are assumed to be independent~\citep{neal1996bayesian,matthews2018gaussian,lee2018deep,garriga2019deep}. 
However, some works \citep{garriga2021correlated, fortuin2021bayesian} proposed correlated priors for convolutional neural networks since trained weights are empirically strongly correlated. They showed that these correlated priors can improve overall performance. 
Our results take into account Bayesian neural networks with possibly dependent priors. More precisely, for $\ell$-th layer pre-activations $g_j = \sum_{i=1}^{H_{\ell-1}} W_{ij} h_i$, $j\in \{1, \dots, H_\ell\}$, weights $W_{i_1j}$ and $W_{i_2j}$ can be dependent for distinct $i_1, i_2 \in \{1, \dots, H_{\ell-1}\}$, while $W_{i j_1}$ and $W_{i j_2}$ are independent for any distinct $j_1, j_2 \in \{1, \dots, H_\ell\}$. By applying Lemma~\ref{lemma:delta_for_z_positive} and \ref{lemma:delta_for_z_equal_zero} from Appendix, we have the relationship between $\Delta$ and values of $z_1, z_2$:
\begin{theorem}[Hidden units dependence]
\label{theorem:hidden_units_dependence}
\theoremDependence
\end{theorem}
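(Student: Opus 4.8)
The plan is to write every pre-activation of layer $\ell$ in the product form $g^{(\ell)}_j = S_j\,\zeta_j$ with $\zeta_j$ symmetric and $S_j \ge 0$, and then invoke Lemmas~\ref{lemma:delta_for_z_positive} and~\ref{lemma:delta_for_z_equal_zero}, which are applicable precisely because such products satisfy the PD condition. Put the $j$-th column of $\bW^{(\ell)}$ in its elliptical stochastic representation $\bW^{(\ell)}_{\cdot j} = R_j A_j\,\boldsymbol{u}_j$, where $A_j A_j^\T = \Sigma_j$ is the (assumed non-degenerate) shape matrix, $\boldsymbol{u}_j$ is uniform on the unit sphere of $\mathbb{R}^{H_{\ell-1}}$, $R_j \ge 0$ with $\mathbb{P}(R_j = 0) = 0$, and $R_j \perp \boldsymbol{u}_j$. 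Since distinct columns are independent, the pairs $(R_j,\boldsymbol{u}_j)$ are mutually independent; and under the standing assumption (as in Theorem~\ref{theorem:non-negative_covariance}) that the weights of layer $\ell$ are independent of $\bh^{(\ell-1)}$, all of them are independent of $\bh^{(\ell-1)}$. Then, on $\{A_j^\T\bh^{(\ell-1)} \neq \boldsymbol{0}\}$,
\[
g^{(\ell)}_j = \bigl\langle \bW^{(\ell)}_{\cdot j},\,\bh^{(\ell-1)}\bigr\rangle
= \underbrace{R_j\bigl\|A_j^\T\bh^{(\ell-1)}\bigr\|}_{=:\,S_j}\cdot\underbrace{\bigl\langle \boldsymbol{u}_j,\;A_j^\T\bh^{(\ell-1)}/\|A_j^\T\bh^{(\ell-1)}\|\bigr\rangle}_{=:\,\zeta_j},
\]
and $g^{(\ell)}_j = 0 = S_j$ on the complement. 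Conditionally on $\bh^{(\ell-1)}$, each $\zeta_j$ is the inner product of a sphere-uniform vector with a fixed unit vector, so its conditional law is symmetric and does not depend on $\bh^{(\ell-1)}$, and the $\zeta_j$ are conditionally independent; combined with the mutual independence of the $(R_j,\boldsymbol{u}_j)$ and their independence of $\bh^{(\ell-1)}$, this gives that $(\zeta_1,\dots,\zeta_{H_\ell})$ are mutually independent, each symmetric, and jointly independent of $(S_1,\dots,S_{H_\ell})$, whereas the $S_j$ may be arbitrarily dependent through their shared argument $\bh^{(\ell-1)}$. For $\ell = 1$ the argument $\bh^{(0)}$ is deterministic, so each $g^{(1)}_j$ is a function of the $j$-th column alone; distinct columns being independent, the $g^{(1)}_j$ are mutually independent and $\Delta^{(1)} \equiv 0$.

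For $\ell \ge 2$, the representation $g^{(\ell)}_j = S_j\zeta_j$ with possibly-dependent nonnegative $S_j$ and independent symmetric $\zeta_j$ is exactly the hypothesis under which Lemmas~\ref{lemma:delta_for_z_positive} and~\ref{lemma:delta_for_z_equal_zero} yield the claimed sign of $\Delta^{(\ell)}(z_1,z_2)$ following $\operatorname{sign}(z_1z_2)$. It remains to handle $z_1 = 0$ or $z_2 = 0$. Fixing two distinct units $g^{(\ell)},\tilde g^{(\ell)}$ and conditioning on $\bh^{(\ell-1)}$, they become independent, and since each is an inner product of a symmetric elliptical column with $\bh^{(\ell-1)}$ one has (using the non-degenerate shape matrix) $\mathbb{P}(g^{(\ell)} = 0 \mid \bh^{(\ell-1)}) = \mathbf{1}\{\bh^{(\ell-1)} = \boldsymbol{0}\}$, hence $\mathbb{P}(g^{(\ell)} \ge 0 \mid \bh^{(\ell-1)}) = \tfrac12 + \tfrac12\,\mathbf{1}\{\bh^{(\ell-1)} = \boldsymbol{0}\}$. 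Inserting this into \eqref{eq:def-delta} and taking expectations (the unconditional first-moment term cancels) gives $\Delta^{(\ell)}(0,z_2) = \tfrac12\operatorname{Cov}\!\bigl(\mathbf{1}\{\bh^{(\ell-1)} = \boldsymbol{0}\},\,\mathbb{P}(\tilde g^{(\ell)} \ge z_2 \mid \bh^{(\ell-1)})\bigr)$, and symmetrically for $\Delta^{(\ell)}(z_1,0)$. If $\phi$ satisfies $\mathbb{P}(\phi(\bg^{(\ell-1)}) = \boldsymbol{0}) = 0$, the indicator is a.s.\ zero and the covariance vanishes; conversely, if that probability is positive, choose $z_2 > 0$ small enough that $\mathbb{P}(\tilde g^{(\ell)} \ge z_2) > 0$ (possible since $\tilde g^{(\ell)}$ is symmetric and not identically zero): on $\{\bh^{(\ell-1)} = \boldsymbol{0}\}$ one has $\tilde g^{(\ell)} = 0$, so the conditional survival probability is $0$ there while its overall mean is positive, whence the covariance is strictly negative and $\Delta^{(\ell)}(0,z_2) \neq 0$. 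This establishes the ``if and only if''.

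The step I expect to be the main obstacle is the independence bookkeeping in the decomposition: one must check carefully that the angular parts $\zeta_j$, which depend both on the sphere-uniform directions and on the random vector $\bh^{(\ell-1)}$, nevertheless come out mutually independent and jointly independent of the scale vector $(S_1,\dots,S_{H_\ell})$, and one must dispatch the degeneracy issues ($R_j = 0$ or $A_j^\T\bh^{(\ell-1)} = \boldsymbol{0}$ with positive probability, rank-deficient shape matrices, and the one-dimensional case where $\zeta_j \in \{-1,1\}$). Once the product representation with the correct independence structure is in place, the sign pattern follows directly from the cited lemmas and the boundary computation above.
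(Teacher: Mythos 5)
There is a genuine gap in the central case $\ell\ge 2$, $z_1 z_2\neq 0$. Your stochastic representation $g^{(\ell)}_j=S_j\zeta_j$ with $\zeta_j$ symmetric, mutually independent and independent of $(S_1,\dots,S_{H_\ell})$, and the $S_j\ge 0$ ``possibly dependent'', is correct as far as it goes, but it is \emph{not} sufficient to conclude the sign pattern of $\Delta^{(\ell)}$, and the appendix lemmas are not stated for that abstraction (their hypotheses are the network setting itself; the ``PD condition'' you invoke belongs to Corollary~\ref{cor:units_dependence_condition} and is a different, weaker property that does not determine the sign of $\Delta^{(\ell)}(z_1,z_2)$). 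To see that your reduction loses essential structure, take $\zeta,\tilde\zeta$ i.i.d.\ Rademacher, $S=U$ and $\tilde S=1-U$ with $U$ uniform on $[0,1]$: then for $z_1=z_2=0.6$ one gets
\begin{equation*}
\Delta(z_1,z_2)=\tfrac14\bigl(\mathbb{P}(S\ge z_1,\tilde S\ge z_2)-\mathbb{P}(S\ge z_1)\mathbb{P}(\tilde S\ge z_2)\bigr)=\tfrac14(0-0.16)<0,
\end{equation*}
violating the claimed positivity for $z_1z_2>0$. The ingredient your proposal omits is precisely what the paper's Lemma~\ref{lemma:delta_for_z_positive} establishes: after conditioning on $\bX=\bh^{(\ell-1)}$, the two conditional tail probabilities $\mathbb{P}_W(\bW^\T\bX\ge z_i\mid\bX)$ are both functions $\xi_{z_i}(Y)$ of the \emph{single} scalar $Y=\|\bX\|_\Sigma$ (this is where ellipticity and the identical distribution of the two weight columns are used), each monotone in $Y$ with monotonicity determined by the sign of $z_i$; the sign of $\Delta^{(\ell)}=\mathrm{Cov}(\xi_{z_1}(Y),\xi_{z_2}(Y))$ then follows from the association inequality of Lemma~\ref{lemma:covariance_sign}. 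In your notation, the point is that $S$ and $\tilde S$ are not arbitrarily dependent: they are $R\,\|A^\T\bh^{(\ell-1)}\|$ and $\tilde R\,\|A^\T\bh^{(\ell-1)}\|$ with $R,\tilde R$ independent of everything, i.e.\ conditionally independent given, and co-monotone in, a common scalar. Without stating and using that, the main claim is not proved.

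The remaining parts are essentially fine. The $\ell=1$ case matches the paper (independence of the pre-activations for a deterministic input). Your treatment of $z_1=0$, writing $\mathbb{P}(g^{(\ell)}\ge 0\mid\bh^{(\ell-1)})=\tfrac12+\tfrac12\mathbf{1}\{\bh^{(\ell-1)}=\boldsymbol{0}\}$ and reducing $\Delta^{(\ell)}(0,z_2)$ to a covariance with the indicator of $\{\bh^{(\ell-1)}=\boldsymbol{0}\}$, is a clean equivalent of the paper's Lemma~\ref{lemma:delta_for_z_equal_zero}, which instead expands everything in terms of $p=\mathbb{P}(\bX=\boldsymbol{0})$ and obtains $\Delta^{(\ell)}(0,0)=p(1-p)/4$; both arguments need to discard the degenerate case $p=1$ (where the covariance also vanishes) by noting it cannot occur for an activation function, which the paper does explicitly and you do only implicitly via ``not identically zero''.
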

The case of $\mathbb{P} \left( \phi(\bg^{(\ell)}) = 0 \right) = 0$ corresponds to post-activations without critical mass at zero. They can be obtained after applying activation functions such as identity, sigmoid, ELU, and others, but not ReLU.

\begin{remark}
\label{remark:delta_with_upper_bounds}
Due to the ellipticity and zero-centering of distributions, the statement of Theorem~\ref{theorem:hidden_units_dependence} is also true for $\tilde \Delta^{(\ell)}(z_1, z_2) := \mathbb{P}(g^{(\ell)} \leq z_1, \tilde g^{(\ell)} \leq z_2)-\mathbb{P}(g^{(\ell)}\leq z_1)\mathbb{P}(\tilde g^{(\ell)}\leq z_2)$.
\end{remark}

\subsection{Corollaries}

Dependence measures and properties are interrelated. 
Widely used measures such as Kendall's tau and Spearman's rho \citep{nelsen2007introduction}, take into account the concordance. Based on Theorem~\ref{theorem:hidden_units_dependence}, we establish that for hidden units these coefficients are equal to zero.
\begin{corollary}
\label{corollary:tau_rho}
\corollaryTauRho
\end{corollary}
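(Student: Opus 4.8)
The plan is to reduce both concordance measures to integrals of the dependence functional from Theorem~\ref{theorem:hidden_units_dependence} and then use the sign pattern established there to show that the positive and negative contributions cancel \emph{exactly}. Writing $F,G$ for the marginals and $\tilde\Delta^{(\ell)}$ for the functional of Remark~\ref{remark:delta_with_upper_bounds}, $\tilde\Delta^{(\ell)}(z_1,z_2)=\mathbb{P}(g^{(\ell)}\le z_1,\tilde g^{(\ell)}\le z_2)-\mathbb{P}(g^{(\ell)}\le z_1)\mathbb{P}(\tilde g^{(\ell)}\le z_2)$, I would record the standard representations for continuous pre-activations,
\begin{equation*}
\rho_S = 12 \int_{\mathbb{R}^2} \tilde\Delta^{(\ell)}(z_1,z_2)\, \ddr F(z_1)\, \ddr G(z_2), \qquad \tau = \mathbb{E}\left[\mathrm{sgn}\big((g_1 - g_2)(\tilde g_1 - \tilde g_2)\big)\right],
\end{equation*}
where $(g_1,\tilde g_1),(g_2,\tilde g_2)$ are independent copies of $(g^{(\ell)},\tilde g^{(\ell)})$. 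Continuity of the marginals (so ties have probability zero and the copula is well defined) follows from the ellipticity of the weights, which I would note at the outset; the case $\ell=1$ is immediate since $\tilde\Delta^{(1)}\equiv 0$ forces independence and hence $\rho_S=\tau=0$.

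The key structural input that turns the mere sign information of Theorem~\ref{theorem:hidden_units_dependence} into an exact vanishing is that the joint law is invariant under a sign flip of a single coordinate, $(g^{(\ell)},\tilde g^{(\ell)}) \stackrel{d}{=} (-g^{(\ell)},\tilde g^{(\ell)})$. I would justify this by conditioning on the previous layer: given $\bh^{(\ell-1)}$, the two pre-activations $\sum_i W_{ij} h_i$ and $\sum_i W_{ij'} h_i$ are linear forms in the independent, zero-centred elliptical weight vectors $(W_{ij})_i$ and $(W_{ij'})_i$, hence conditionally independent and each symmetric about $0$; integrating over $\bh^{(\ell-1)}$ preserves this single-coordinate symmetry. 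Together with symmetry of the marginals it yields the antisymmetry $\tilde\Delta^{(\ell)}(-z_1,z_2) = -\tilde\Delta^{(\ell)}(z_1,z_2)$, which is exactly the relation underlying the sign table of Theorem~\ref{theorem:hidden_units_dependence}: the reflection $z_1\mapsto -z_1$ carries the matching-sign region $\{z_1 z_2 \ge 0\}$, on which $\tilde\Delta^{(\ell)}\ge 0$, bijectively onto the opposite-sign region $\{z_1 z_2 \le 0\}$, on which $\tilde\Delta^{(\ell)}\le 0$.

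With these ingredients the conclusion follows by cancellation. For Spearman's rho I would split the integral over the four quadrants cut out by the zero axes (the correct dividing lines, since the marginals are centred). By Theorem~\ref{theorem:hidden_units_dependence} the integrand is nonnegative on $\{z_1 z_2 \ge 0\}$ and nonpositive on $\{z_1 z_2 \le 0\}$, so its sign alone leaves $\rho_S$ undetermined; but the reflection $z_1\mapsto -z_1$ maps one region onto the other, fixes the product measure $\ddr F\,\ddr G$ (centred, hence even, marginals), and sends $\tilde\Delta^{(\ell)}$ to $-\tilde\Delta^{(\ell)}$, so the two contributions are equal in magnitude and opposite in sign and $\rho_S=0$. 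For Kendall's tau, applying the single-coordinate flip to both independent copies preserves the law of the pair while reversing the sign of $g_1-g_2$, hence of the product $(g_1-g_2)(\tilde g_1-\tilde g_2)$; therefore $\tau=-\tau=0$. Throughout I would use $\tilde\Delta^{(\ell)}$ and $\Delta^{(\ell)}$ interchangeably, as licensed by Remark~\ref{remark:delta_with_upper_bounds}.

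I expect the main obstacle to be conceptual rather than computational: Theorem~\ref{theorem:hidden_units_dependence} supplies only that $\tilde\Delta^{(\ell)}$ is nonnegative on the matching-sign quadrants and nonpositive on the opposite-sign quadrants, which on its own would merely constrain $\rho_S$ and $\tau$ rather than pin them down. The exact value $0$ genuinely requires the stronger fact that the magnitudes on the positive and negative regions coincide, and this equality is precisely what the single-coordinate sign-flip invariance provides. I would therefore devote the most care to establishing that invariance cleanly from the conditional independence of the two pre-activations given $\bh^{(\ell-1)}$, since it is the step that upgrades the two-signed quadrant structure of Theorem~\ref{theorem:hidden_units_dependence} from ``non-negative versus non-positive'' into an exact cancellation.
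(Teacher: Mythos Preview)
Your proof is correct and follows a genuinely different route from the paper's. The paper first proves an auxiliary extension of Theorem~\ref{theorem:hidden_units_dependence} (their Theorem~\ref{theorem:independent_copies_dependence}) to the functionals $\Delta_2^\pm$ built from sums and differences of two independent copies of the pre-activations, by stacking the copies into a doubled vector $\bX_0=[\bX_1,\bX_2]$ and rerunning the ellipticity argument; it then invokes $\Delta_2^-(0,0)=0$ to factorize $\mathbb{P}(g_1-g_2>0,\ \tilde g_1-\tilde g_2>0)$ and computes $\tau$ term by term from the concordance definition, handling $\rho$ analogously with three copies. You instead isolate the single-coordinate sign-flip invariance $(g^{(\ell)},\tilde g^{(\ell)})\stackrel{d}{=}(-g^{(\ell)},\tilde g^{(\ell)})$ directly from the conditional independence and central symmetry of the weight vectors, derive the antisymmetry $\tilde\Delta^{(\ell)}(-z_1,z_2)=-\tilde\Delta^{(\ell)}(z_1,z_2)$, and feed it into Hoeffding's integral formula for $\rho_S$ (cancellation against the even marginal measure) and into the concordance expectation for $\tau$ (law invariance under a sign change of $g_1-g_2$). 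Your route is shorter, avoids the auxiliary theorem entirely, and makes explicit the point you correctly flag: the sign pattern of Theorem~\ref{theorem:hidden_units_dependence} alone only constrains $\rho_S$ and $\tau$, and it is the exact antisymmetry --- equivalently, the sign-flip invariance --- that forces the magnitudes on opposite quadrants to match and the measures to vanish. The paper's route has the complementary virtue of remaining entirely inside the $\Delta$-functional framework already built up, so the corollary reads as a direct application of (a mild extension of) the main dependence theorem.
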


The following dependence condition of hidden units is defined by~\citet{vladimirova2021bayesian} in order to establish the Weibull-tail property of hidden units. 
Random variables $X_1, \dots, X_N$ satisfy the \textit{positive dependence (PD) condition} if the following inequalities hold for all $z \in \mathbb{R}$ and some constant~$C> 0$:
\begin{align*}\mathbb{P}\left(X_1\ge 0,\ldots,X_{N-1} \ge 0 | X_N \ge z\right) &\ge C \quad \text{(right tail),}\\ 
\mathbb{P}\left(X_1\le 0,\ldots,X_{N-1} \le 0 | X_N \le z \right) &\ge C \quad \text{ (left tail).}
\end{align*}
The proof of Theorem~\ref{theorem:hidden_units_dependence} can be adapted to prove the following property for hidden units, originally proved in \citet{vladimirova2021bayesian}.
\begin{corollary}[\citealp{vladimirova2021bayesian}]
\label{cor:units_dependence_condition}
\lemmaHiddenUnitsDependence
\end{corollary}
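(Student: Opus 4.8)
The plan is to exploit the fact that, conditionally on $X_1, \dots, X_N$, the products $X_i W_i$ are independent (since the $W_i$ are mutually independent and independent of the $X_i$), and that the symmetry of each $W_i$ forces $\mathbb{P}(X_i W_i \ge 0 \mid X_i) \ge 1/2$ regardless of the sign of $X_i$. First I would fix $z \in \mathbb{R}$ and write, by conditioning on $W_N$ and $X_1,\dots,X_N$ and using the tower property,
\begin{equation*}
\mathbb{P}\left(X_1 W_1 \ge 0, \dots, X_{N-1} W_{N-1} \ge 0 \mid X_N W_N \ge z\right)
= \frac{\mathbb{E}\left[ \mathbf{1}\{X_N W_N \ge z\} \prod_{i=1}^{N-1} \mathbb{P}(X_i W_i \ge 0 \mid X_i) \right]}{\mathbb{P}(X_N W_N \ge z)},
\end{equation*}
where the conditional independence of the $X_i W_i$ given the $X_i$'s lets the product of conditional probabilities come out. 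The key pointwise bound is $\mathbb{P}(X_i W_i \ge 0 \mid X_i = x) \ge 1/2$ for every $x$: if $x > 0$ this is $\mathbb{P}(W_i \ge 0) \ge 1/2$ by symmetry, if $x < 0$ it is $\mathbb{P}(W_i \le 0) \ge 1/2$, and if $x = 0$ it is $1$. Hence $\prod_{i=1}^{N-1} \mathbb{P}(X_i W_i \ge 0 \mid X_i) \ge 2^{-(N-1)}$ almost surely, and plugging this into the numerator gives a right-tail bound with constant $C = 2^{-(N-1)}$, uniformly in $z$.

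The left-tail inequality is handled symmetrically: replacing each $W_i$ by $-W_i$ (which has the same distribution by symmetry) turns the events $\{X_i W_i \le 0\}$ into $\{X_i W_i \ge 0\}$ and $\{X_N W_N \le z\}$ into $\{-X_N W_N \le z\} = \{X_N W_N \ge -z\}$, so the same computation applies with $z$ replaced by $-z$ and yields the same constant $C = 2^{-(N-1)}$. Taking the minimum of the two constants (here they coincide) gives a single $C > 0$ valid for both tails and all $z$, which is exactly the PD condition.

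The only delicate point is making the conditioning rigorous: one must justify that, given the $\sigma$-algebra generated by $X_1, \dots, X_N$, the variables $X_1 W_1, \dots, X_N W_N$ are conditionally independent, which follows because the $W_i$ are independent of each other and of $(X_1,\dots,X_N)$, so their joint conditional law given the $X_i$'s is a product. Everything else is the elementary symmetry bound $\mathbb{P}(W_i \ge 0) \ge 1/2$ and $\mathbb{P}(W_i \le 0) \ge 1/2$ (both hold simultaneously since $\mathbb{P}(W_i \ge 0) + \mathbb{P}(W_i \le 0) \ge 1$ and symmetry makes them equal up to the atom at $0$), together with monotonicity of expectation. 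No ellipticity or Gaussianity of the $W_i$ is needed — only symmetry — which is why this statement holds in the broad generality claimed.
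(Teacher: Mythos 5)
Your proof is correct, and it is worth noting that the paper itself does not write out a proof of this corollary: it only remarks that the proof of Theorem~\ref{theorem:hidden_units_dependence} ``can be adapted'' and cites the original reference. Your argument supplies exactly that adaptation, and it shares the one essential structural step with the proof of Lemma~\ref{lemma:delta_for_z_positive}, namely conditioning on $(X_1,\dots,X_N)$ so that the products $X_iW_i$ become conditionally independent and the joint probability factorizes. Where the paper's machinery then invokes ellipticity and the monotone-covariance Lemma~\ref{lemma:covariance_sign} to control the \emph{sign} of $\Delta$, you correctly observe that the PD condition only needs a uniform \emph{lower bound}, and the elementary symmetry inequality $\mathbb{P}(xW_i\ge 0)\ge \nicefrac12$ for every fixed $x$ (with value $1$ at $x=0$) already yields $C=2^{-(N-1)}$ uniformly in $z$; your reduction of the left tail to the right tail via $W_i\mapsto -W_i$ is also sound, since the joint law of $(X_1,\dots,X_N,W_1,\dots,W_N)$ is invariant under that map. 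This route is more elementary and more general than the one the paper hints at --- no ellipticity is used, which matches the hypotheses of the statement, which assume only symmetry. The only points you should make explicit in a final write-up are (i) the tower-property computation showing that the numerator $\mathbb{P}(X_1W_1\ge 0,\dots,X_{N-1}W_{N-1}\ge 0,\, X_NW_N\ge z)$ equals $\mathbb{E}\bigl[\mathbf{1}\{X_NW_N\ge z\}\prod_{i=1}^{N-1}\mathbb{P}(X_iW_i\ge 0\mid X_i)\bigr]$, which you assert but do not derive, and (ii) the standing convention that the conditional probability is only considered for $z$ with $\mathbb{P}(X_NW_N\ge z)>0$; neither is a genuine gap.
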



\section{Experiments}
\label{section:experiments}

We have built neural networks of $L = 2, 3, 4$ hidden layers, with $H = 2, 5, 10$ hidden units on each layer. 
We used a fixed input $\mathbf{x}$ of size $10^4$, which can be thought of as an image of dimension $100\times 100$. This input was sampled once for all with standard Gaussian entries. 
In order to obtain samples from the prior distribution of the neural network units, we have sampled the weights from independent centered Gaussians from which units were obtained by forward evaluation with the ReLU non-linearity. This process was iterated $n =10^5$ times. We propagated the priors and calculated values of $\Delta^{(L)}$, defined in Equation~\eqref{eq:def-delta}, for $z_1, z_2$ on a grid $(-1.0, 1.0)\times (-1.0, 1.0)$. The~results are illustrated on Figure~\ref{fig:dependence_influence_width-depth}. All subplots are appeared to be divided into four quadrants of negative and positive values, confirming Theorem~\ref{theorem:hidden_units_dependence}: $\Delta^{(L)}$ is positive when $z_1$ and $z_2$ are of the same sign, and $\Delta^{(L)}$ is negative otherwise.

The increase of the number of hidden units $H$ leads to less dependence between hidden units as~the~obtained values~of~$\Delta^{(L)}$ are smaller. Moreover, the center of the plot takes values closer to zero than the corners. The $\Delta^{(L)}$ values are more spread out and less peaked. 
The increase of the depth $L$ leads to the opposite result when the corners become closer to zero than the center while the $\Delta^{(L)}$ values become more peaked around zero. 

\section{Discussion}

We described analytically and empirically the dependence between hidden units in Bayesian neural networks. We proved that Kendall's tau and Spearman's rho are equal to zero. These results help to understand better the influence of changing the width and depth in Bayesian neural networks. 

\paragraph{Representation learning.} \citet{aitchison2020bigger} studied the prior over representations in finite and infinite Bayesian neural networks. The narrower, deeper networks offer more flexibility because the covariance of the outputs gradually disappears as network size increases. The results are obtained by considering the variability in the top-layer kernel induced by the prior over a finite neural network. Our empirical results show that such deep narrow neural networks keep hidden units highly dependent in the center. Therefore, there might be a connection between the prior over representations and highly-peaked dependence between units. 

\paragraph{Width-depth trade-off.} From a deep Gaussian process perspective, \citet{pleiss2021limitations} argue that width becomes harmful to model fit and performance as the posterior becomes less data-dependent with width. Empirically, there is a sweet spot in width for convolutional neural networks, depending on the dataset. The increase of width beyond this sweet spot degrades the performance. The tail analysis demonstrates that width and depth have opposite effects: depth accentuates a model's non-Gaussianity, while width makes models increasingly Gaussian. Indeed, it was proved that Bayesian neural network units are heavier-tailed with depth~\citep{vladimirova2018bayesian,zavatone2021exact,noci2021precise,vladimirova2021bayesian}. So the increase of width might make the resulting units distributions more Gaussian in the center.



\bibliographystyle{apalike}

\appendix

\section{Bayesian neural network properties}
\label{appendix:additional_lemmas}

\subsection{Covariance}

Further, we provide the proof of the following theorem that refines the non-negative theorem from~\citet{vladimirova2018bayesian}. 
\begin{manualtheorem}{\ref{theorem:non-negative_covariance}}
\theoremCovariance 
\end{manualtheorem}
\begin{proof}
 
Consider first hidden layer distinct pre-activations $g^{(1)} = \bW^{(1)^\T} \bh^{(0)}$ and $\tilde g^{(1)} = \tilde \bW^{(1)^\T} \bh^{(0)}$ as described in Equation~\eqref{eq:hidden_unit_form}. Since $h^{(0)}$ is a deterministic vector, the covariance between pre-activations is of the same sign as the covariance between the weights: 
\begin{align*}
    \text{Cov} \left[ \bW^{(1)^\T} \bh^{(0)}, \tilde \bW^{(1)^\T} \bh^{(0)} \right] 
    = \sum_{i = 1}^{H_{1}}  \sum_{j = 1}^{H_{1}} \left(\mathbb{E}\left[ W_i^{(1)} \tilde W_j^{(1)} \right]  - \mathbb{E}\left[ W_i^{(1)} \right]  \mathbb{E}\left[ \tilde W_j^{(1)}  \right] \right) h_i^{(0)} h_j^{(0)}.
\end{align*}
If the weights are uncorrelated, then the units are uncorrelated, therefore, $ \text{Cov} \left[ g^{(1)}, \tilde g^{(1)}\right] = 0$. 

  
Consider the case where $\ell \ge 2$.
Let $\bX \in \mathbb{R}^{H_{\ell-1}}$ be outputs of hidden layer $\ell - 1$, $\bW \in \mathbb{R}^{H_{\ell-1}}$ be weights that follow some prior distribution, $\tilde \bW \in \mathbb{R}^{H_{\ell-1}}$ be an independent copy of $\bW$. Two distinct units of layer $\ell$ can be written as $g^{(\ell)} = \bW^\T \bX$ and $\tilde g^{(\ell)} = \tilde \bW^\T \bX$. Then, the covariance between pre-activations $g^{(\ell)}$ and $\tilde g^{(\ell)}$  can be expressed as
\begin{equation*}
    \text{Cov} \left[ g^{(\ell)}, \tilde g^{(\ell)} \right] 
    = \sum_{i = 1}^{H_{\ell - 1}}  \sum_{j = 1}^{H_{\ell - 1}} \left(\mathbb{E}[ W_i \tilde W_j ] \mathbb{E}\left[ X_i X_j\right] - \mathbb{E}\left[ W_i \right]  \mathbb{E}[ \tilde W_j ] \mathbb{E}\left[ X_i\right]  \mathbb{E}\left[X_j \right] \right).
\end{equation*}
Since the weights are uncorrelated, we have 
\begin{equation*}
    \text{Cov} \left[ g^{(\ell)}, \tilde g^{(\ell)} \right] 
    = \sum_{i = 1}^{H_{\ell - 1}}  \sum_{j = 1}^{H_{\ell - 1}} \mathbb{E}\left[ W_i \right]  \mathbb{E}[ \tilde W_j  ] \left( \mathbb{E}\left[ X_i X_j\right] -  \mathbb{E}\left[ X_i\right]  \mathbb{E}\left[X_j \right] \right).
\end{equation*}
If $\mathbb{E}[W_i] = 0$ for all $i = 1, 
\dots, H_{\ell - 1}$, then $\text{Cov} \left[ g^{(\ell)}, \tilde g^{(\ell)} \right] = 0$.  




\end{proof}

\subsection{Dependence}

We provide an auxiliary lemma that we will further use for the dependence theorem proof. 
\begin{lemma} 
\label{lemma:covariance_sign}
Let $Y$ be a random variable on $\mathbb{R}$ and $\xi_1, \xi_2 : \mathbb{R} \to \mathbb{R}$ be monotonic functions. Then $\mathrm{Cov}(\xi_1(Y), \xi_2(Y)) \ge 0$ if $\xi_1$ and $\xi_2$ have the same monotonicity (are both non-increasing or both non-decreasing), and $\mathrm{Cov}(\xi_1(Y), \xi_2(Y)) \le 0$ otherwise. 
\end{lemma}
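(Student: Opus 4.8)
The plan is to use the classical ``independent copy'' symmetrization trick (a form of Chebyshev's sum inequality / Hoeffding's covariance identity). Let $Y'$ be an independent copy of $Y$, i.e.\ $Y$ and $Y'$ are i.i.d.\ on a common probability space. The starting point is the identity
\begin{equation*}
2\,\mathrm{Cov}(\xi_1(Y), \xi_2(Y)) = \mathbb{E}\!\left[\big(\xi_1(Y) - \xi_1(Y')\big)\big(\xi_2(Y) - \xi_2(Y')\big)\right],
\end{equation*}
which follows by expanding the right-hand side, using that $(Y,Y')$ and $(Y',Y)$ have the same law, and using independence to split $\mathbb{E}[\xi_1(Y)\xi_2(Y')] = \mathbb{E}[\xi_1(Y)]\,\mathbb{E}[\xi_2(Y')]$. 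Before invoking it one records the standing integrability assumption that $\xi_1(Y),\xi_2(Y)\in L^2$, so that the covariance is well-defined and, by Cauchy--Schwarz, the product inside the expectation is integrable.

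Next I would determine the sign of the integrand pointwise. Fix $y,y'\in\mathbb{R}$. If $\xi_1$ and $\xi_2$ are both non-decreasing, then each of $\xi_1(y)-\xi_1(y')$ and $\xi_2(y)-\xi_2(y')$ has the same sign as $y-y'$ (with the convention that a zero difference is compatible with either sign), so their product is $\ge 0$; the same holds if both are non-increasing, since then the two sign flips cancel. If instead exactly one function is non-decreasing and the other non-increasing, the two factors have opposite signs (or one of them vanishes), so the product is $\le 0$. Taking expectations in the displayed identity yields $\mathrm{Cov}(\xi_1(Y),\xi_2(Y))\ge 0$ in the concordant case and $\le 0$ in the discordant case, which is exactly the claim.

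I do not anticipate a genuine obstacle: the only delicate point is integrability — making sure $\mathrm{Cov}(\xi_1(Y),\xi_2(Y))$ exists and the symmetrized expectation is legitimate — which the $L^2$ hypothesis settles. A secondary bookkeeping point is the sign convention when $y=y'$ or when a monotone function is locally constant, but since the conclusion is a non-strict inequality, treating ``$0$ as compatible with any sign'' makes this entirely harmless.
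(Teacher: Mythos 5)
Your proposal is correct and follows essentially the same route as the paper's proof: both introduce an independent copy of $Y$, use the identity $2\,\mathrm{Cov}(\xi_1(Y),\xi_2(Y)) = \mathbb{E}\left[(\xi_1(Y)-\xi_1(Y'))(\xi_2(Y)-\xi_2(Y'))\right]$, and conclude from the pointwise sign of the integrand. Your explicit attention to the $L^2$ integrability hypothesis is a small but welcome addition that the paper leaves implicit.
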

\begin{proof}
Let $Y_1$ be an independent copy of $Y$. Let us consider the following expectation: 
\begin{multline*}
    \mathbb{E} \left[\left(\xi_1(Y) - \xi_1(Y_1)\right) \left(\xi_2(Y) - \xi_2(Y_1)\right) \right] = \\
    \mathbb{E} \left[\xi_1(Y) \xi_2(Y) \right] - \mathbb{E} \left[ \xi_1(Y) \xi_2(Y_1)\right] - \mathbb{E} \left[ \xi_1(Y_1) \xi_2(Y) \right]  + \mathbb{E} \left[ \xi_1(Y_1) \xi_2(Y_1)\right]. 
\end{multline*}
The independence of $Y$ and $Y_1$ yields $\mathbb{E} \left[ \xi_1(Y)  \xi_2(Y_1)\right] = \mathbb{E} \left[ \xi_1(Y) \right] \mathbb{E} \left[\xi_2(Y_1)\right]$. Since $Y$ and $Y_1$ are identically distributed, then we get 
\begin{equation*}
     \mathbb{E} \left[\left(\xi_1(Y) - \xi_1(Y_1)\right) \left(\xi_2(Y) - \xi_2(Y_1)\right) \right] = 2 \text{Cov}\left[\xi_1(Y), \xi_2(Y) \right].
\end{equation*}

If $\xi_1$ and $\xi_2$ are both increasing or both decreasing, then, for all $x, y \in \mathbb{R}$, 
\begin{equation*}
    (\xi_1(x) - \xi_1(y))(\xi_2(x) - \xi_2(y)) \ge 0.
\end{equation*}
Otherwise, for all $x, y \in \mathbb{R}$, we have 
\begin{equation*}
    (\xi_1(x) - \xi_1(y))(\xi_2(x) - \xi_2(y)) \le 0.
\end{equation*}
Taking the expectation leads to the conclusion.
\end{proof}

\begin{lemma}
\label{lemma:delta_for_z_positive}
Consider a Bayesian neural network as described in Equation~\eqref{eq:hidden_unit_form} with some activation function. Let elements of weight vector $\bW^{(\ell)}$ follow some zero-center elliptical (possibly different and possibly dependent) distributions, and weight vectors be independent for distinct units of the following layer. 
If~$\ell = 1$, then $\Delta^{(\ell)}(z_1, z_2) = 0$ for all $z_1, z_2$. 
If $\ell \ge 2$, then $\Delta^{(\ell)}(z_1, z_2) \ge 0$ if $z_1 z_2 \ge 0$, and $\Delta^{(\ell)}(z_1, z_2) \le 0$ otherwise. 
\end{lemma}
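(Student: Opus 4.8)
The plan is to handle the two layers separately and, for $\ell\ge 2$, to condition on the previous layer so as to rewrite $\Delta^{(\ell)}$ as the covariance of two monotone functions of a single random variable, at which point Lemma~\ref{lemma:covariance_sign} finishes the argument. For $\ell=1$, write $g^{(1)}=\bW^\T\bh^{(0)}$ and $\tilde g^{(1)}=\tilde\bW^\T\bh^{(0)}$, where $\bW,\tilde\bW$ are the weight vectors of the two units and $\bh^{(0)}$ is the deterministic input. Since weight vectors of distinct units are independent, $g^{(1)}$ and $\tilde g^{(1)}$ are independent, hence $\mathbb{P}(g^{(1)}\ge z_1,\tilde g^{(1)}\ge z_2)=\mathbb{P}(g^{(1)}\ge z_1)\mathbb{P}(\tilde g^{(1)}\ge z_2)$ and $\Delta^{(1)}(z_1,z_2)=0$ for all $z_1,z_2$.

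Now take $\ell\ge 2$ and set $\bX=\bh^{(\ell-1)}$, $g^{(\ell)}=\bW^\T\bX$, $\tilde g^{(\ell)}=\tilde\bW^\T\bX$, where $\bW,\tilde\bW$ are the two units' weight vectors: independent of each other and of $\bX$, zero-centred and elliptical with a common scatter matrix $\Sigma$. Conditionally on $\bX$, the vectors $\bW,\tilde\bW$ are still independent, so $g^{(\ell)}$ and $\tilde g^{(\ell)}$ are conditionally independent given $\bX$, and therefore
\begin{align*}
\Delta^{(\ell)}(z_1,z_2)
&=\mathbb{E}\!\left[\mathbb{P}(g^{(\ell)}\ge z_1\mid\bX)\,\mathbb{P}(\tilde g^{(\ell)}\ge z_2\mid\bX)\right]
-\mathbb{E}\!\left[\mathbb{P}(g^{(\ell)}\ge z_1\mid\bX)\right]\mathbb{E}\!\left[\mathbb{P}(\tilde g^{(\ell)}\ge z_2\mid\bX)\right]\\
&=\mathrm{Cov}\!\left(F_{z_1}(\bX),\,F_{z_2}(\bX)\right),
\qquad\text{where}\quad F_z(\bx):=\mathbb{P}(\bW^\T\bx\ge z).
\end{align*}

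It remains to express $F_z$ as a monotone transform of a scalar. By the standard representation of elliptical distributions \citep{cambanis1981theory}, $\bW^\T\bx\stackrel{d}{=}\sqrt{\bx^\T\Sigma\bx}\;Z$ for a fixed symmetric random variable $Z$ independent of $\bx$, so $F_z(\bx)=\psi_z\!\left(s(\bx)\right)$ with $s(\bx):=\sqrt{\bx^\T\Sigma\bx}$ and $\psi_z(u):=\mathbb{P}(uZ\ge z)$; here $\psi_z$ is non-decreasing in $u$ when $z>0$, non-increasing when $z<0$, and constant on $u>0$ when $z=0$. Since the two units share the scatter $\Sigma$, both $F_{z_1}(\bX)$ and $F_{z_2}(\bX)$ are functions of the single random variable $Y:=s(\bX)$, and Lemma~\ref{lemma:covariance_sign} applies with $Y$ and $\xi_i=\psi_{z_i}$: if $z_1,z_2$ are both positive or both negative the two functions share monotonicity, giving $\mathrm{Cov}(\psi_{z_1}(Y),\psi_{z_2}(Y))\ge 0$, i.e.\ $\Delta^{(\ell)}(z_1,z_2)\ge 0$; if $z_1,z_2$ have opposite signs the monotonicities disagree and $\Delta^{(\ell)}(z_1,z_2)\le 0$. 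The borderline case $z_1z_2=0$ follows from the same computation, the relevant $\psi_{z_i}$ being then only weakly monotone, which still yields a weak inequality.

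The main obstacle is the elliptical reduction: showing that $\mathbb{P}(\bW^\T\bx\ge z)$ depends on $\bx$ only through the scale $\sqrt{\bx^\T\Sigma\bx}$, reading off the direction of monotonicity from $\mathrm{sign}(z)$, and — crucially — using the shared scatter matrix to make both tail functions depend on the \emph{same} scalar $Y$ (without this, $\mathrm{Cov}(\psi_{z_1}(s_1(\bX)),\psi_{z_2}(s_2(\bX)))$ need not be sign-definite). The passage from $\Delta^{(\ell)}$ to a covariance via conditional independence and the final invocation of Lemma~\ref{lemma:covariance_sign} are then routine. A secondary point meriting a separate check is the behaviour at the thresholds $z_i=0$ when $\bX$ carries an atom at the origin (as under ReLU), where $\psi_0$ is only weakly monotone.
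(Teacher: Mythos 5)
Your proposal is correct and follows essentially the same route as the paper's proof: $\ell=1$ by independence of the weight vectors, and for $\ell\ge 2$ conditioning on $\bX$ to write $\Delta^{(\ell)}$ as $\mathrm{Cov}\left(\xi_{z_1}(Y),\xi_{z_2}(Y)\right)$ with $Y=\|\bX\|_\Sigma$, then reading off the sign from the monotonicity of $\xi_z$ via Lemma~\ref{lemma:covariance_sign}. The only (cosmetic) difference is that you invoke the standard stochastic representation $\bW^\T\bx\stackrel{d}{=}\sqrt{\bx^\T\Sigma\bx}\,Z$ of elliptical laws where the paper rederives it by an explicit rotation change of variables, and your caveats about the shared scatter matrix and the atom of $\bX$ at the origin when $z=0$ match the points the paper handles (the latter in its separate Lemma~\ref{lemma:delta_for_z_equal_zero}).
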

\begin{proof}
The case where $\ell = 1$ trivially holds as pre-activations are independent for independent weights. 

Consider the case where $\ell \ge 2$.
Let $\bX \in \mathbb{R}^{H_{\ell-1}}$ be outputs of hidden layer $\ell - 1$, $\bW \in \mathbb{R}^{H_{\ell-1}}$ be weights that follow some prior distribution, and $\tilde \bW \in \mathbb{R}^{H_{\ell-1}}$ be an independent copy of $\bW$. Since $g^{(\ell)}$  and $\tilde g^{(\ell)}$ are two distinct units of layer $\ell$, they can be written as $g^{(\ell)} = \bW^\T \bX$ and $\tilde g^{(\ell)} = \tilde \bW^\T \bX$, then 
\begin{align*}
    \mathbb{P}\left(g^{(\ell)} \ge z_1, \tilde g^{(\ell)} \ge z_2 \right) &=  \mathbb{P}\left(\bW^\T \bX \ge z_1, \tilde \bW^\T \bX \ge z_2 \right)  \\
    &= \mathbb{E}\left[ \mathbb{I}\left( \bW^\T \bX \ge z_1, \tilde \bW^\T \bX \ge z_2 \right) \right] \\
    &=\mathbb{E}_X \left[ \mathbb{E}_W\left[ \mathbb{I}\left( \bW^\T \bX \ge z_1, \tilde \bW^\T \bX \ge z_2\right) \right] \Bigl. \Bigr|  \bX \right] \\
    &=\mathbb{E}_X \left[ \mathbb{P}_W\left( \bW^\T \bX \ge z_1, \tilde \bW^\T \bX \ge z_2 \Bigl. \Bigr|  \bX \right) \right].
    \label{lemma:units_dependence:eq_joint_probability_as_product}
\end{align*}
Since the weights $\bW$ and $\tilde \bW$ of different hidden units are independent, pre-activations are independent conditionally on $\bX$. Therefore, we can express the conditional joint probability as a product of conditional probabilities:
\begin{equation*}
    \mathbb{P}_W\left( \bW^\T \bX \ge z_1, \tilde \bW^\T \bX \ge z_2 \Bigl. \Bigr|  \bX \right)
    = \mathbb{P}_W\left( \bW^\T \bX \ge z_1 \Bigl. \Bigr|  \bX \right) \mathbb{P}_W\left( \tilde \bW^\T \bX \ge z_2 \Bigl. \Bigr|  \bX  \right). 
\end{equation*}
Weights $\bW$ and $\tilde \bW$ are identically distributed, so the conditional probabilities differ only by the lower bound values $z_1$ and $z_2$. Therefore, we get 
\begin{equation}
\label{lemma:units_dependence:eq_exp_of_product}
    \mathbb{P}\left(g^{(\ell)} \ge z_1, \tilde g^{(\ell)} \ge z_2 \right) =
    \mathbb{E}_X \left[ \mathbb{P}_W\left( \bW^\T \bX \ge z_1 \Bigl. \Bigr|  \bX \right) \mathbb{P}_W\left( \bW^\T \bX \ge z_2 \Bigl. \Bigr|  \bX \right) \right].
\end{equation}

Now consider the product of probabilities 
\begin{multline}
    \mathbb{P}\left(g^{(\ell)} \ge z_1 \right)  \mathbb{P}\left(\tilde g^{(\ell)} \ge z_2 \right) =  
    \mathbb{P}\left( \bW^\T \bX \ge z_1 \right) \mathbb{P}\left( \tilde \bW^\T \bX \ge z_2 \right)  \\
    =  \mathbb{P}\left( \bW^\T \bX \ge z_1 \right)  \mathbb{P}\left( \bW^\T \bX \ge z_2 \right) \\
    = \mathbb{E}_X \left[ \mathbb{P}_W\left( \bW^\T \bX \ge z_1 \Bigl. \Bigr|  \bX \right)\right] \mathbb{E}_X \left[ \mathbb{P}_W\left( \bW^\T \bX \ge z_2 \Bigl. \Bigr|  \bX \right) \right].
    \label{lemma:units_dependence:eq_product_of_exp}
\end{multline}
Then, by combining Equations~\eqref{lemma:units_dependence:eq_exp_of_product} and \eqref{lemma:units_dependence:eq_product_of_exp}, at the $\ell$-th layer we get 
\begin{equation*}
    \Delta(z_1, z_2) = \text{Cov}\left[\mathbb{P}_W\left( \bW^\T \bX \ge z_1 \Bigl. \Bigr|  \bX \right), \mathbb{P}_W\left( \bW^\T \bX \ge z_2 \Bigl. \Bigr|  \bX \right)\right].
\end{equation*}

Since $\bW$ follows a centered elliptical distribution, then for some positive-definite matrix $\Sigma$ and some scalar function $\psi$ the density function has the form $f(\bw) = \psi(\bw^\T \Sigma^{-1} \bw)$~\citep{cambanis1981theory}. 

Consider the case when $z \not= 0$. From ellipticity we have
\begin{align*}
    \mathbb{P}_W\left( \bW^\T \bX \ge z \Bigl. \Bigr|  \bX \right) &= \int \mathbb{I}\left[ \bw^\T \bX \ge z \right] \psi \left(\bw^\T \Sigma^{-1} \bw \right) \ddr \bw \\
    &= \int \mathbb{I}\left[ \bw^\T \frac{\bX}{\|\bX\|_\Sigma} \ge \frac{z}{\|\bX\|_\Sigma} \right] \psi \left(\bw^\T \Sigma^{-1} \bw \right)  \ddr \bw.
\end{align*}
Introduce the change of variables  $\bv = Q_{\bX}^\T \Sigma^{-\nicefrac12} \bw$ for some rotation (orthogonal) matrix~$Q_{\bX}$ (which satisfies $Q_{\bX}^{-1} = Q_{\bX}^\T$) such that $Q_{\bX}^{-1} \Sigma^{\nicefrac12} \frac{\bX}{\|\bX\|_\Sigma}$ equals the first basis vector $\be_1$. Since $\det(Q_{\bX})=1$ is independent of $\bX$, this shows that
\begin{align*}
    \mathbb{P}_W\left( \bW^\T \bX \ge z \Bigl. \Bigr|  \bX \right)
    &= \int \mathbb{I}\left[ \bv^\T \be_1 \ge \frac{z}{\|\bX\|_\Sigma} \right] \psi \left(\bv^\T \bv \right)  \det(\Sigma^{1/2}) \ddr \bv, 
\end{align*}
thus establishing that function $\bX\mapsto \mathbb{P}_W\left( \bW^\T \bX \ge z \Bigl. \Bigr|  \bX \right)$ is actually a function of~$Y = \|\bX\|_\Sigma$, a one-dimensional random variable, i.e. for $Y > 0$ and for some function~$\xi_z$, $\mathbb{P}_W\left( \bW^\T \bX \ge z \Bigl. \Bigr|  \bX \right) = \xi_z \left( Y \right)$.

Determine $\xi_z(0) = \mathbb{I}[z \le 0]$. Then, 
\begin{equation*}
    \Delta^{(\ell)}(z_1, z_2) = \text{Cov}\left[\xi_{z_1}(Y), \xi_{z_2}(Y)\right].
\end{equation*}

If $z > 0$, then $Y \to \xi_z(Y)$ is non-decreasing  as $\mathbb{I}[z \le 0] = 0 \le \Psi \left( \frac{z}{Y} \right)$. 
Similarly, if $z < 0$, then $Y \to \xi_z(Y)$ is non-increasing. 

Therefore, $\xi_{z_1}$ and $\xi_{z_2}$ have the same monotonicity if $z_1$ and $z_2$ are of the same sign. According to Lemma~\ref{lemma:covariance_sign}, in this case  $\Delta(z_1, z_2) = \mathrm{Cov}(\xi_{z_1}(Y), \xi_{z_2}(Y) )\ge 0$. If $z_1$ and $z_2$ are of different signs, then  $\Delta(z_1, z_2) = \mathrm{Cov}(\xi_{z_1}(Y), \xi_{z_2}(Y) )\le 0$.

If $z = 0$, then $\xi_0(Y) \le 1$ for $Y > 0$ and $\xi_0(0) = 1$. Thus, since at the smallest value the function has the maximum, $\xi_0(Y)$ is non-increasing, and Lemma~\ref{lemma:covariance_sign} can also be applied to the case when $z_1$ or $z_2$ is zero.




\end{proof}

\begin{lemma}
\label{lemma:delta_for_z_equal_zero}
Consider a Bayesian neural network as described in Equation~\eqref{eq:hidden_unit_form} with some activation function $\phi$. Let elements of weight vector $\bW^{(\ell)}$ follow some zero-center elliptical (possibly different and possibly dependent) distributions, and weight vectors be independent for distinct units of the following layer. The activation function satisfies $\mathbb{P}\left(\phi(\bg^{(\ell)}) = 0 \right) = 0$ at layer $\ell$ iff $\Delta^{(\ell)}(0, z) = 0$ for any $z$. 
\end{lemma}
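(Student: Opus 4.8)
The plan is to work from the covariance representation established inside the proof of Lemma~\ref{lemma:delta_for_z_positive}. Writing $\bX = \bh^{(\ell-1)}$ and $Y = \|\bX\|_\Sigma$, that proof shows $\Delta^{(\ell)}(z_1, z_2) = \mathrm{Cov}[\xi_{z_1}(Y), \xi_{z_2}(Y)]$, where $\xi_z(Y) = \mathbb{P}_W(\bW^\T \bX \ge z \mid \bX)$ is a function of $\bX$ through $Y$ alone, with $\xi_z(0) = \mathbb{I}[z\le 0]$. First I would record the shape of $\xi_0$: on $\{Y>0\}$ the conditional distribution of $\bW^\T\bX$ given $\bX$ is a non-degenerate centered elliptical, hence symmetric with no atom at $0$, so $\xi_0(Y) = \mathbb{P}_W(\bW^\T\bX \ge 0\mid\bX) = \tfrac12$; and $\xi_0(0) = 1$. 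Thus $\xi_0(Y) = \tfrac12 + \tfrac12\,\mathbb{I}[Y = 0]$, so the only way $\xi_0(Y)$ fails to be a.s.\ constant is through the event $\{Y = 0\} = \{\bh^{(\ell-1)} = \mathbf{0}\}$. This reduces the claim to the equivalence: $\Delta^{(\ell)}(0,z) = 0$ for all $z$ $\iff$ $\mathbb{P}(\bh^{(\ell-1)} = \mathbf{0}) = 0$.

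The \emph{if} direction is immediate: if $\mathbb{P}(\bh^{(\ell-1)} = \mathbf{0}) = 0$ then $\xi_0(Y) = \tfrac12$ almost surely, a constant, so $\Delta^{(\ell)}(0,z) = \mathrm{Cov}[\xi_0(Y), \xi_z(Y)] = 0$ for every $z$. For the \emph{only if} direction I would argue by contraposition: suppose $p := \mathbb{P}(\bh^{(\ell-1)} = \mathbf{0}) > 0$. Using $\xi_0(Y) = \tfrac12 + \tfrac12\mathbb{I}[Y=0]$, bilinearity of covariance, $\mathbb{E}[\mathbb{I}[Y=0]\,\xi_z(Y)] = p\,\xi_z(0)$ and $\mathbb{E}[\xi_z(Y)] = \mathbb{P}(g^{(\ell)}\ge z)$ (recall $g^{(\ell)} = \bW^\T\bX$), one gets
\[
\Delta^{(\ell)}(0,z) = \tfrac12\,\mathrm{Cov}\bigl[\mathbb{I}[Y=0], \xi_z(Y)\bigr] = \tfrac{p}{2}\bigl(\mathbb{I}[z\le 0] - \mathbb{P}(g^{(\ell)}\ge z)\bigr).
\]
Specializing to $z = 0$ and using $\mathbb{P}(g^{(\ell)}\ge 0) = \mathbb{E}[\xi_0(Y)] = \tfrac12 + \tfrac{p}{2}$ gives $\Delta^{(\ell)}(0,0) = \tfrac{p(1-p)}{4}$, which is strictly positive since $0 < p < 1$ (a nonconstant post-activation vector is not a.s.\ zero). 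Hence $\Delta^{(\ell)}(0,z) \ne 0$ for some $z$, which is the contrapositive.

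It remains to match $\mathbb{P}(\bh^{(\ell-1)} = \mathbf{0}) = 0$ with the condition on $\phi$, and this is where I expect the real work to be. Since $\bh^{(\ell-1)} = \mathbf{0}$ precisely when $\phi(g^{(\ell-1)}_i) = 0$ for \emph{every} coordinate $i$, an activation with no critical mass at zero — i.e.\ $\mathbb{P}(\phi(G) = 0) = 0$ for every continuously distributed $G$, so that $\phi^{-1}(\{0\})$ is Lebesgue-null — gives $\mathbb{P}(h^{(\ell-1)}_i = 0) = 0$ for each $i$ and hence $\mathbb{P}(\bh^{(\ell-1)} = \mathbf{0}) = 0$ by a union bound; this covers identity, sigmoid, ELU, etc. Conversely, for an activation with $\phi(0) = 0$ and $\phi^{-1}(\{0\})$ of positive measure — the ReLU case, $\phi^{-1}(\{0\}) = (-\infty, 0]$ — I would show $\mathbb{P}(\bh^{(\ell-1)} = \mathbf{0}) > 0$ by a downward induction on layers: conditionally on $\bh^{(\ell-2)} \ne \mathbf{0}$ each $g^{(\ell-1)}_i$ is a non-degenerate symmetric variable and, by independence of the weight vectors across the $H_{\ell-1}$ units, all are nonpositive with probability $(1/2)^{H_{\ell-1}} > 0$; and conditionally on $\bh^{(\ell-2)} = \mathbf{0}$ one has $\bh^{(\ell-1)} = \phi(\mathbf{0}) = \mathbf{0}$. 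The delicate points are thus (i) handling this ReLU recursion cleanly, including the base case $\bh^{(0)} \ne \mathbf{0}$, and (ii) being careful that the lemma's condition is really a property of $\phi$ acting on the post-activations feeding layer $\ell$ (i.e.\ on $\bg^{(\ell-1)}$); everything else is the short covariance computation above.
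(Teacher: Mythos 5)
Your proof is correct and its core is the same as the paper's: both arguments condition on the event $\{\bX=\mathbf{0}\}$ with $p=\mathbb{P}(\bX=\mathbf{0})$, use ellipticity to get $\mathbb{P}_W(\bW^\T\bX\ge 0\mid\bX\neq\mathbf{0})=\tfrac12$ and $\xi_z(0)=\mathbb{I}[z\le 0]$, arrive at the same formulas (your $\Delta^{(\ell)}(0,z)=\tfrac{p}{2}(\mathbb{I}[z\le 0]-\mathbb{P}(g^{(\ell)}\ge z))$ is algebraically identical to the paper's $\tfrac{p(1-p)}{2}\bigl(\mathbb{I}[z\le 0]-\mathbb{E}_X[\mathbb{P}_W(\bW^\T\bX\ge z\mid\bX\neq\mathbf{0})]\bigr)$, and both give $\Delta^{(\ell)}(0,0)=\tfrac{p(1-p)}{4}$), and conclude that vanishing for all $z$ is equivalent to $p\in\{0,1\}$ with $p=1$ excluded because $\phi$ is not identically zero. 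Your packaging via $\xi_0(Y)=\tfrac12+\tfrac12\,\mathbb{I}[Y=0]$ and bilinearity of covariance is a mild streamlining of the paper's direct computation of first and second moments. The one place you genuinely diverge is the final step: the paper simply \emph{identifies} the hypothesis $\mathbb{P}(\phi(\bg^{(\ell)})=0)=0$ with $p=\mathbb{P}(\bX=\mathbf{0})=0$ and stops there, whereas you treat the hypothesis as a property of the function $\phi$ and sketch the equivalence with $\mathbb{P}(\bh^{(\ell-1)}=\mathbf{0})=0$ (Lebesgue-null zero set of $\phi$ in one direction, a layer-wise recursion with probability $(\nicefrac12)^{H_{\ell-1}}$ for ReLU in the other). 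That extra work is not in the paper and is not strictly needed to reproduce its proof, but it correctly fills in what the paper leaves implicit; you are also right that there is an index slip in the statement --- the relevant quantity is $\bX=\bh^{(\ell-1)}=\phi(\bg^{(\ell-1)})$, the post-activations feeding layer $\ell$, not $\phi(\bg^{(\ell)})$.
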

\begin{proof}
With previous notations, we set $z_1 = 0$ and $z_2=z$. Note that we could invert the roles of $z_1$ and $z_2$  without loss of generality. 

Let $\mathbb{P}(\bX = 0) = p$, then $\mathbb{P}(\bX \not= 0) = 1 - p$. Notice that
$\mathbb{P}_W\left( \bW^\T \bX \ge z \Bigl. \Bigr|  \bX = 0\right) = \mathbb{I}[z \le 0]$, and, in particular, if $z = 0$, $\mathbb{P}_W\left( \bW^\T \bX \ge 0 \Bigl. \Bigr|  \bX = 0\right) = 1$. Moreover, $\mathbb{P}_W\left( \bW^\T \bX \ge 0 \Bigl. \Bigr|  \bX \not= 0\right) = \nicefrac12$ due to ellipticity. 

Therefore, for the case when $z = 0$ we have 
\begin{align*}
    \mathbb{E}_X \left[ \mathbb{P}_W\left( \bW^\T \bX \ge 0 \Bigl. \Bigr|  \bX \right)\right] 
    &=\mathbb{E}_X \left[ \mathbb{P}_W\left( \bW^\T \bX \ge 0 \Bigl. \Bigr|  \bX = 0 \right)\right] p \\
    &+ \mathbb{E}_X \left[ \mathbb{P}_W\left( \bW^\T \bX \ge 0 \Bigl. \Bigr|  \bX \not= 0 \right) \right] (1 - p) \\
    &= p + \frac{1 - p}2  = \frac{p + 1}2,  
\end{align*}
\begin{align*}
    \mathbb{E}_X \left[ \mathbb{P}_W^2 \left( \bW^\T \bX \ge 0 \Bigl. \Bigr|  \bX \right)\right] 
    &=  \mathbb{E}_X \left[ \mathbb{P}_W^2\left( \bW^\T \bX \ge 0 \Bigl. \Bigr|  \bX = 0 \right)\right] p \\
    &+ \mathbb{E}_X \left[ \mathbb{P}_W^2\left( \bW^\T \bX \ge 0 \Bigl. \Bigr|  \bX \not= 0 \right)\right] (1 - p) \\
    &=  p + \frac{1 - p}4 = \frac{3p + 1}4.
\end{align*}
Thus, we get
\begin{equation*}
    \Delta^{(\ell)}(0, 0) = \frac{3p + 1}4 - \frac{(p + 1)^2}4 = \frac{p (1 - p)}4 \ge 0.
\end{equation*}
We see that $ \Delta^{(\ell)}(0, 0) = 0$ iff $p = \mathbb{P}(\bX = 0) = 0$ or $1 - p = \mathbb{P}(\bX \not= 0) = 0$.

Now let us consider more general case, where $z \not= 0$: 
\begin{align*}
    \mathbb{E}_X \left[ \mathbb{P}_W\left( \bW^\T \bX \ge z \Bigl. \Bigr|  \bX \right)\right] 
    &=\mathbb{E}_X \left[ \mathbb{P}_W\left( \bW^\T \bX \ge z \Bigl. \Bigr|  \bX = 0 \right)\right] p \\
    &+ \mathbb{E}_X \left[ \mathbb{P}_W\left( \bW^\T \bX \ge z \Bigl. \Bigr|  \bX \not= 0 \right) \right] (1 - p) \\
    &= p\, \mathbb{I}[z \le 0] + (1 - p) \mathbb{E}_X \left[ \mathbb{P}_W\left( \bW^\T \bX \ge z \Bigl. \Bigr|  \bX \not= 0 \right)\right],  
\end{align*}
\begin{align*}
    \mathbb{E}_X &\left[ \mathbb{P}_W\left( \bW^\T \bX \ge 0 \Bigl. \Bigr|  \bX \right)\mathbb{P}_W\left( \bW^\T \bX \ge z \Bigl. \Bigr|  \bX \right)\right] \\
    &=  \mathbb{E}_X \left[ \mathbb{P}_W\left( \bW^\T \bX \ge 0 \Bigl. \Bigr|  \bX = 0 \right)\mathbb{P}_W\left( \bW^\T \bX \ge z \Bigl. \Bigr|  \bX = 0 \right)\right] p \\
    &+ \mathbb{E}_X \left[ \mathbb{P}_W\left( \bW^\T \bX \ge 0 \Bigl. \Bigr|  \bX \not= 0 \right)\mathbb{P}_W\left( \bW^\T \bX \ge z \Bigl. \Bigr|  \bX \not= 0 \right)\right] (1 - p) \\
    &=  p \, \mathbb{I}[z \le 0] + \frac{1 - p}2 \mathbb{E}_X \left[ \mathbb{P}_W\left( \bW^\T \bX \ge z \Bigl. \Bigr|  \bX \not= 0 \right)\right].
\end{align*}
Further, 
\begin{align*}
    \Delta^{(\ell)}(0, z) &= p \, \mathbb{I}[z \le 0] + \frac{1 - p}2 \mathbb{E}_X \left[ \mathbb{P}_W\left( \bW^\T \bX \ge z \Bigl. \Bigr|  \bX \not= 0 \right)\right] \\
    &- \frac{p + 1}2 \left(p\, \mathbb{I}[z \le 0] + (1 - p) \mathbb{E}_X \left[ \mathbb{P}_W\left( \bW^\T \bX \ge z \Bigl. \Bigr|  \bX \not= 0 \right)\right] \right) \\
    &= \frac{p(1 - p)}2 \, \mathbb{I}[z \le 0]  - \frac{p(1 - p)}2 \,\mathbb{E}_X \left[ \mathbb{P}_W\left( \bW^\T \bX \ge z \Bigl. \Bigr|  \bX \not= 0 \right)\right].
\end{align*}
If $z > 0$, then $\Delta^{(\ell)}(0, z) = - \frac{p(1 - p)}2 \,\mathbb{E}_X \left[ \mathbb{P}_W\left( \bW^\T \bX \ge z \Bigl. \Bigr|  \bX \not= 0 \right)\right] \le 0$. 

If $z < 0$, then $\Delta^{(\ell)}(0, z) = \frac{p(1 - p)}2 \left(1 - \mathbb{E}_X \left[ \mathbb{P}_W\left( \bW^\T \bX \ge z \Bigl. \Bigr|  \bX \not= 0 \right)\right] \right) \ge 0$, as $\mathbb{E}_X \left[ \mathbb{P}_W\left( \bW^\T \bX \ge z \Bigl. \Bigr|  \bX\right)\right] \le 1$. 

Notice that $\Delta^{(\ell)}(0, z) = 0 $ iff $p = 0$ or $1 - p = 0$ for any $z$. 

The case when $p = 1$ means that $\bX = \varphi\left(\bg^{(\ell)}\right) = 0$ for any $\bg^{(\ell)}$. It cannot be the case for an activation function, thus, we get the statement of the lemma. 
\end{proof}

\subsection{Corollaries}

Corollary~\ref{corollary:tau_rho} requires a generalization of Theorem~\ref{theorem:hidden_units_dependence} to sums and differences of pre-activations. Let $\Delta_2^+$ and $\Delta_2^-$ be defined as 
\begin{align}
    \Delta_2^+(z_1, z_2) &:= \mathbb{P}(g_1^{(\ell)} + g_2^{(\ell)} \geq z_1, \tilde g_1^{(\ell)} + \tilde g_2^{(\ell)}\geq z_2) - \mathbb{P}(g_1^{(\ell)} + g_2^{(\ell)}\geq z_1)\mathbb{P}(\tilde g_2^{(\ell)} + \tilde g_2^{(\ell)}\geq z_2), \\
    \Delta_2^-(z_1, z_2) &:= \mathbb{P}(g_1^{(\ell)} - g_2^{(\ell)} \geq z_1, \tilde g_1^{(\ell)} - \tilde g_2^{(\ell)}\geq z_2) - \mathbb{P}(g_1^{(\ell)} - g_2^{(\ell)}\geq z_1)\mathbb{P}(\tilde g_2^{(\ell)} - \tilde g_2^{(\ell)}\geq z_2),
\end{align}
where $g_1^{(\ell)}$, $g_2^{(\ell)}$ are independent copies of hidden unit $g^{(\ell)}$, and  $\tilde g_1^{(\ell)}$, $\tilde g_2^{(\ell)}$ are independent copies of hidden unit $\tilde g^{(\ell)}$.
\begin{theorem}
\label{theorem:independent_copies_dependence}
Under the assumptions of Theorem~\ref{theorem:hidden_units_dependence}, the same result holds for $\Delta_2^+$ and $\Delta_2^-$.
\end{theorem}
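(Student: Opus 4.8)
The plan is to reduce the four‑fold sums and differences to exactly the form already handled by Lemmas~\ref{lemma:delta_for_z_positive} and~\ref{lemma:delta_for_z_equal_zero}, so that those proofs transfer essentially verbatim. Write the four units as $g_i^{(\ell)} = \bW_i^\T \bX$ and $\tilde g_i^{(\ell)} = \tilde\bW_i^\T \bX$ for $i=1,2$, where $\bX = \bh^{(\ell-1)}$ is the common output of layer $\ell-1$ and $\bW_1,\bW_2,\tilde\bW_1,\tilde\bW_2$ are independent of $\bX$ and mutually independent, each distributed like the zero‑centered elliptical weight vector with dispersion matrix $\Sigma$. Then $g_1^{(\ell)} + g_2^{(\ell)} = (\bW_1+\bW_2)^\T\bX =: \bW_+^\T\bX$ and $\tilde g_1^{(\ell)} + \tilde g_2^{(\ell)} = \tilde\bW_+^\T\bX$ with $\bW_+,\tilde\bW_+$ independent copies of one another, and similarly $g_1^{(\ell)} - g_2^{(\ell)} = \bW_-^\T\bX$ with $\bW_- := \bW_1 - \bW_2$.

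The key structural step I would carry out first is that $\bW_+$ (and $\bW_-$) is again a zero‑centered elliptical vector with the same dispersion matrix $\Sigma$. This is the closure of the elliptical family under convolution of summands sharing a dispersion matrix, and follows from the characteristic‑function representation of~\citet{cambanis1981theory}: $\bW_1$ has characteristic function $t\mapsto\varphi(t^\T\Sigma t)$, hence $\bW_+$ has characteristic function $t\mapsto\varphi(t^\T\Sigma t)^2$, again of the required form; moreover, since $\bW_1$ has a density so does $\bW_+$. For $\bW_-$ one additionally notes $-\bW_2 \stackrel{d}{=}\bW_2$ by symmetry of a zero‑centered elliptical law, so $\bW_- \stackrel{d}{=}\bW_+$. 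In particular, after whitening by $\Sigma^{-1/2}$ the vector $\bW_+$ is spherically symmetric, which is the only feature of $\bW$ that the proofs of Lemmas~\ref{lemma:delta_for_z_positive} and~\ref{lemma:delta_for_z_equal_zero} actually use: that for $\bX\ne 0$ the conditional probability $\mathbb{P}(\bW_+^\T\bX\ge z\mid\bX)$ depends on $\bX$ only through $Y = \|\bX\|_\Sigma$, is monotone in $Y$ with monotonicity governed by the sign of $z$, and equals $\tfrac12$ at $z=0$ when $\bX\ne 0$.

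Given this, the rest is a verbatim re‑run. The conditioning‑on‑$\bX$ step goes through because, conditionally on $\bX$, $\bW_+^\T\bX$ and $\tilde\bW_+^\T\bX$ are independent (the weight blocks $\{\bW_1,\bW_2\}$ and $\{\tilde\bW_1,\tilde\bW_2\}$ are independent), so that $\Delta_2^+(z_1,z_2) = \mathrm{Cov}\big(\xi_{z_1}(Y),\xi_{z_2}(Y)\big)$ with $\xi_z(Y) = \mathbb{P}(\bW_+^\T\bX\ge z\mid\bX)$; Lemma~\ref{lemma:covariance_sign} then gives the sign statement, and the computation of Lemma~\ref{lemma:delta_for_z_equal_zero} with $\bW_+$ in place of $\bW$ gives the statement about $\Delta_2^+(0,z)$ and its vanishing iff $\mathbb{P}(\phi(\bg^{(\ell)})=0)=0$. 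Running the same argument with $\bW_-$ handles $\Delta_2^-$. The case $\ell=1$ is immediate: then $\bX=\bh^{(0)}$ is deterministic, so $g_1^{(1)}+g_2^{(1)}$ and $\tilde g_1^{(1)}+\tilde g_2^{(1)}$ (respectively the differences) are functions of disjoint, independent blocks of weights, hence independent, and $\Delta_2^+ = \Delta_2^- = 0$.

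I expect the only real obstacle to be the closure statement for elliptical laws under convolution — that $\bW_1+\bW_2$ with a common dispersion matrix is again zero‑centered elliptical with that matrix — together with the minor point that the density, and hence the change‑of‑variables/rotation argument of Lemma~\ref{lemma:delta_for_z_positive}, survives the convolution. A secondary point to state carefully is the exact joint law of the ``independent copies'' $g_1^{(\ell)},g_2^{(\ell)},\tilde g_1^{(\ell)},\tilde g_2^{(\ell)}$: they are taken to share the common layer‑$(\ell-1)$ output $\bX$ with mutually independent weight vectors, which is what makes the proof parallel to that of Theorem~\ref{theorem:hidden_units_dependence}, and $\{\bW_1,\bW_2\}$ must be independent of $\{\tilde\bW_1,\tilde\bW_2\}$ so that the two sums (or differences) are conditionally independent given $\bX$.
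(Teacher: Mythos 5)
Your reduction addresses a different object than the one the theorem is about, and the discrepancy matters. You take $g_1^{(\ell)}$ and $g_2^{(\ell)}$ to share a \emph{common} previous-layer output $\bX$, writing $g_1^{(\ell)}+g_2^{(\ell)}=(\bW_1+\bW_2)^\T\bX$. But for $\ell\ge 2$ the output $\bX=\bh^{(\ell-1)}$ is random, so two variables of the form $\bW_1^\T\bX$ and $\bW_2^\T\bX$ with a shared $\bX$ are only conditionally independent given $\bX$, not independent; they are therefore not ``independent copies of $g^{(\ell)}$'' as the theorem's statement requires. This is not a cosmetic point: the theorem exists to serve Corollary~\ref{corollary:tau_rho}, where $(X_1,Y_1)$ and $(X_2,Y_2)$ are genuinely independent draws of the pair of units, i.e.\ they carry independent realizations $\bX_1,\bX_2$ of the entire sub-network below layer $\ell$ (with $\bX_i$ shared only between $g_i^{(\ell)}$ and $\tilde g_i^{(\ell)}$ within copy $i$). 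Your $\Delta_2^\pm$ computed with a common $\bX$ is a different quantity, and establishing its sign does not give $\mathbb{P}(X_1-X_2>0,\,Y_1-Y_2>0)=\mathbb{P}(X_1-X_2>0)\,\mathbb{P}(Y_1-Y_2>0)$ as needed for Kendall's tau. You flag the joint law of the copies as ``a secondary point to state carefully,'' but it is in fact the crux, and you resolve it the wrong way.

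The paper's route keeps $\bX_1$ and $\bX_2$ distinct and instead stacks the vectors: $\bW_0=[\bW_1,\bW_2]$, $\bX_0=[\bX_1,\bX_2]\in\mathbb{R}^{2H_{\ell-1}}$, so that $g_1^{(\ell)}+g_2^{(\ell)}=\bW_0^\T\bX_0$, and then reruns the argument of Lemmas~\ref{lemma:delta_for_z_positive} and~\ref{lemma:delta_for_z_equal_zero} in dimension $2H_{\ell-1}$ (with $[\bW_1,-\bW_2]$ for the difference, using the symmetry $-\bW_2\stackrel{d}{=}\bW_2$ as you do). The price of that route is the claim that the concatenation of two independent elliptical vectors is again elliptical, which the paper asserts and which holds in the Gaussian case. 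Your convolution observation --- that $\bW_1+\bW_2$ with common dispersion matrix $\Sigma$ has characteristic function $\varphi(t^\T\Sigma t)^2$ and is hence again zero-centered elliptical --- is correct and would be a clean ingredient, but it is attached to the wrong decomposition; with independent $\bX_1,\bX_2$ the sum $\bW_1^\T\bX_1+\bW_2^\T\bX_2$ does not collapse to a single inner product against one $\bX$, so that ingredient cannot be used as you propose.
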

\begin{proof}
We say $g^{(\ell)} = \bW^\T \bX$, where $\bX \in \mathbb{R}^{H_{\ell-1}}$ be outputs of hidden layer $\ell - 1$, $\bW \in \mathbb{R}^{H_{\ell-1}}$ be weights that follow some prior distribution, independent of $\bX$. Similarly,   $\tilde g^{(\ell)} = \tilde \bW^\T \bX$, where $\tilde \bW \in \mathbb{R}^{H_{\ell-1}}$ be independent copy of $\bW$. 
We can express the joint probability in $\Delta_2^+$ as
\begin{equation*}
    \mathbb{P}\left(g_1^{(\ell)} + g_2^{(\ell)} \geq z_1, \tilde g_1^{(\ell)} + \tilde g_2^{(\ell)}\geq z_2 \right) = \mathbb{P}\left(\bW_1^\T \bX_1 + \bW_2^\T \bX_2\ge z_1, \tilde \bW_1^\T \bX_1 +  \tilde \bW_2^\T \bX_2 \ge z_2 \right).
\end{equation*}
Following the proof of Theorem~\ref{theorem:hidden_units_dependence}, we have 
\begin{multline*}
    \mathbb{P}\left(g_1^{(\ell)} + g_2^{(\ell)} \geq z_1, \tilde g_1^{(\ell)} + \tilde g_2^{(\ell)}\geq z_2 \right) \\ = \mathbb{P}\left(\bW_1^\T \bx_1 + \bW_2^\T \bx_2\ge z_1, \tilde \bW_1^\T \bx_1 +  \tilde \bW_2^\T \bx_2 \ge z_2 \Bigl. \Bigr|  \bX_1 = \bx_1, \bX_2 = \bx_2 \right).
\end{multline*}
Let us denote $\bW_0 = [\bW_1, \bW_2] \in \mathbb{R}^{2H_{\ell-1}}$, $\tilde \bW_0 = [\tilde \bW_1, \tilde \bW_2] \in \mathbb{R}^{2H_{\ell-1}}$, $\bX_0 = [\bX_1, \bX_2] \in \mathbb{R}^{2H_{\ell-1}}$, and $\bx_0 = [\bx_1, \bx_2] \in \mathbb{R}^{2H_{\ell-1}}$. We obtain $\bW_0$ and $\tilde \bW_0$ are vectors of elliptical distributions independent of $\bX_0$. 
Now, we can rewrite 
\begin{align*}
    \mathbb{P}\left(g_1^{(\ell)} + g_2^{(\ell)} \geq z_1, \tilde g_1^{(\ell)} + \tilde g_2^{(\ell)}\geq z_2 \right) &= \mathbb{P}\left(\bW_0^\T \bx_0 \ge z_1, \tilde \bW_0^\T \bx_0 \ge z_2 \Bigl. \Bigr|  \bX_0 = \bx_0 \right) \\
    &= \mathbb{P}\left(\bW_0^\T \bx_0 \ge z_1\Bigl. \Bigr|  \bX_0 = \bx_0 \right) \mathbb{P}\left(\tilde \bW_0^\T \bx_0 \ge z_2 \Bigl. \Bigr|  \bX_0 = \bx_0 \right).
\end{align*}
The same way we get an equation for a product of probabilities 
\begin{equation*}
    \mathbb{P}\left(g_1^{(\ell)} + g_2^{(\ell)} \ge z_1 \right)  \mathbb{P}\left(\tilde g_1^{(\ell)} + \tilde g_2^{(\ell)} \ge z_2 \right) =  
    \mathbb{P}\left( \bW_0^\T \bx_0 \ge z_1 \right) \mathbb{P}\left( \tilde \bW_0^\T \bx_0 \ge z_2 \right).
\end{equation*}
The rest of the proof is exactly the same as in Theorem~\ref{theorem:hidden_units_dependence}.

Notice that if $\bW$ is elliptical, then $-\bW$ is elliptical. Then, for the case of $\Delta_2^-$, we denote $\bW_0 = [\bW_1, - \bW_2]$ and $\tilde \bW_0 = [\tilde \bW_1, - \tilde \bW_2]$, which are also elliptical vectors independent of $\bX_0$. Similarly as for $\Delta_2^+$, we obtain the statement for $\Delta_2^-$.
\end{proof}

\begin{manualcorollary}{\ref{corollary:tau_rho}}
\corollaryTauRho
\end{manualcorollary}

\begin{proof}
Consider random variables $(X, Y)$ with some joint distribution. Let $(X_1, Y_1)$ and $(X_2, Y_2)$ be independent and identically distributed random copies of $(X, Y)$. From~\citet{nelsen2007introduction}, Kendall's tau $\tau$ can be expressed as 
\begin{equation*}
    \tau = \tau_{X, Y} = \mathbb{P}\left[ (X_1 - X_2)(Y_1 - Y_2) > 0\right] - \mathbb{P}\left[ (X_1 - X_2)(Y_1 - Y_2) < 0\right]. 
\end{equation*}
Let $X$ and $Y$ be different hidden units from Bayesian neural networks satisfying the assumptions in the statement.

Notice that $\mathbb{P}\left[ (X_1 - X_2)(Y_1 - Y_2)Y > 0 \right] = \mathbb{P}\left[ X_1 - X_2 > 0, Y_1 - Y_2 > 0 \right] + \mathbb{P}\left[ X_1 - X_2 < 0, Y_1 - Y_2 < 0 \right]$. From  Theorem~\ref{theorem:independent_copies_dependence}, we have $\Delta(0, 0)=0$, so $\mathbb{P}\left[ X_1 - X_2  > 0, Y_1 - Y_2 > 0 \right] = \mathbb{P}\left[ X_1 - X_2  > 0\right]\mathbb{P}\left[ Y_1 - Y_2 > 0 \right]$. Since $X_1$ and $X_2$ are independent copies of $X$, $\mathbb{P}\left[ X_1 > X_2 \right] = 1/2$
Similarly, combining Theorem~\ref{theorem:independent_copies_dependence} with  Remark~\ref{remark:delta_with_upper_bounds}, $\mathbb{P}\left[ X_1 - X_2  < 0, Y_1 - Y_2 < 0 \right] = \mathbb{P}\left[ X_1 - X_2 < 0\right]\mathbb{P}\left[ Y_1 - Y_2 < 0 \right]$ and $\mathbb{P}\left[ X_1 < X_2 \right] = 1/2$. Therefore, $\tau = 0$.

Spearman's rho $\rho$ is defined as 
\begin{equation*}
     \rho = \rho_{X, Y} = 3\left(\mathbb{P}\left[ (X_1 - X_2)(Y_1 - Y_3) > 0\right] - \mathbb{P}\left[ (X_1 - X_2)(Y_1 - Y_3) < 0\right]\right), 
\end{equation*}
where $(X_1, Y_1)$, $(X_2, Y_2)$ and $(X_3, Y_3)$ are independent and identically distributed random copies of~$(X, Y)$~\citep{nelsen2007introduction}. The proof for $\rho$ is identical.
\end{proof}

\end{document}